\documentclass[11pt]{article}

\usepackage[margin=1in]{geometry}
\usepackage{amsmath,amssymb}
\usepackage{microtype}
\usepackage{graphicx}
\usepackage{subcaption}
\usepackage{booktabs}
\usepackage{algorithm}
\usepackage{algorithmic}
\usepackage{hyperref}
\usepackage{mathtools}
\usepackage{amsthm}
\usepackage[numbers,sort&compress]{natbib}
\usepackage[capitalize,noabbrev]{cleveref}

\theoremstyle{plain}
\newtheorem{theorem}{Theorem}[section]
\newtheorem{proposition}[theorem]{Proposition}

\theoremstyle{definition}

\theoremstyle{remark}

\title{\textbf{A Plug-in Interpretation of Conditioning in\\
Score-Based Diffusion Models}}

\author{
Libo Chen$^{1}$ \quad
Souvik Ghosh$^{2}$ \quad
Teo Deveney$^{1}$ \quad
Chris Budd$^{1}$ \quad
Vinay P. Namboodiri$^{1}$
\\[0.7em]
\small $^{1}$University of Bath, Bath, UK
\\
\small $^{2}$International Institute of Information Technology Hyderabad, India
}

\date{}

\begin{document}

\maketitle

\begin{abstract}
We propose a conditioning mechanism for diffusion models based on multi-speed joint diffusion of the target and the condition.
The mechanism learns an unconditional joint score network and enforces conditioning at inference via a plug-in correction term.
The plug-in term separates the conditioning contribution from the learned unconditional dynamics, offering a transparent view of how the condition steers generation of the target distribution.
Building on this, we derive explicit conditional reverse-time SDEs and approximate probability-flow ODEs, enabling principled and directly comparable conditional samplers.
To reduce the induced ODE--SDE discrepancy, we introduce a log-Fokker--Planck residual regularization that improves ODE sampling quality.
Experiments on conditional image generation tasks demonstrate competitive performance and support the effectiveness of the plug-in conditioning view.
Additional ODE--SDE comparison experiments show that the log-Fokker--Planck residual regularization improves deterministic ODE sampling.
\end{abstract}


\section{Introduction}
\label{sec:intro}

Score-based diffusion models~\cite{song2020score} have emerged as a powerful class of generative models, achieving state-of-the-art performance in image synthesis~\cite{dhariwal2021diffusion}, audio generation~\cite{kong2020diffwave}, and inverse problems~\cite{song2021solving}. In many practical settings, however, the goal is conditional generation: producing samples that are consistent with observations or constraints, such as arbitrary masks in image inpainting~\cite{lugmayr2022repaint}, low-resolution measurements in super-resolution~\cite{saharia2022image}, or noisy partial observations in data assimilation~\cite{rozet2023score}.

Conditional generation requires sampling from a condition-dependent reverse process. This, in turn, depends on estimating a conditional score function, which is generally intractable and must therefore be approximated. Existing approaches primarily differ in how the conditioning information is incorporated into the diffusion dynamics.

Following~\cite{batzolis2021conditional}, existing methods can broadly be divided into two families based on the treatment of the condition during the forward diffusion process. Fixed-condition approaches diffuse only the target state while keeping the condition unchanged, whereas joint-diffusion approaches diffuse both the target and the condition and learn a joint score field.

\textbf{Fixed-condition estimators.}
The first family keeps the condition fixed and learns a conditional score network directly from paired data via denoising score matching, often by concatenating the condition into the score model~\cite{song2019generative,vincent2011connection}. While effective, this design tightly couples the learned model to a specific conditioning interface and observation operator. As a result, adapting to new inverse problems may require substantial re-tuning or retraining. Moreover, because the condition itself is not modeled as a stochastic variable within the diffusion process, these methods provide limited insight into how observations explicitly influence the reverse-time dynamics.

\textbf{Diffused-condition estimators.}
The second family jointly diffuses the target and the condition and learns a joint score field. This includes the conditional diffusive estimator (CDiffE) and its multi-speed variant (CMDE)~\cite{batzolis2021conditional}, where the condition may evolve under a different noise schedule to balance optimization ease and approximation fidelity. Although more flexible, existing joint-diffusion approaches typically rely on implicit schedule matching and do not provide an explicit conditional reverse-time SDE/ODE formulation. Consequently, the mechanism through which observations guide the reverse dynamics remains less transparent.

More recently, pretrained-prior approaches have emerged as an effective
alternative for inverse problems. Methods such as Diffusion Posterior Sampling
(DPS)~\cite{chung2022diffusion} and DiffPIR~\cite{zhu2023denoising} combine
pretrained diffusion models with observation-consistency updates during
sampling. While DPS relies on iterative likelihood-gradient evaluations
through the pretrained score network, which can increase computational cost. DiffPIR introduces an
optimization-inspired data-consistency step.
Our work treats
conditioning as an analytic correction to a learned score, applies both to pretrained-prior restoration and to joint target--condition diffusion, and leads directly to conditional reverse SDE/ODE dynamics.

In this work, we propose a unified conditional diffusion framework based on multi-speed joint diffusion with analytic plug-in conditioning. We jointly diffuse the target and the condition under potentially different noise schedules and train a single unconditional joint score model. Conditioning is imposed explicitly at inference time through an analytic plug-in correction derived from the forward corruption process, rather than being absorbed implicitly into the learned dynamics. This yields a modular conditional score estimator that separates unconditional generation dynamics from observation-consistency corrections.

Building on this, we derive explicit conditional reverse-time stochastic differential equations (SDEs) together with matched approximate probability-flow ordinary differential equations (ODEs). The conditioning contribution appears in closed form within the reverse dynamics, providing a transparent interpretation of how observations influence both stochastic and deterministic sampling trajectories.

Beyond methodology, inspired by recent progress on reducing the ODE--SDE discrepancy in unconditional diffusion models~\cite{deveney2025closing}, we analyze the discrepancy between conditional ODE and SDE samplers. We show that this discrepancy can be bounded by a log-Fokker--Planck residual associated with the learned score dynamics. This provides a mechanism for improving deterministic ODE sampling quality: as the residual decreases, the probability-flow ODE more closely tracks the reverse-time SDE distribution.

Empirically, we evaluate the proposed framework on conditional image generation tasks including inpainting and super-resolution across CelebA, FFHQ, and ImageNet. We compare against representative conditional diffusion approaches as well as modern pretrained-prior methods, including DPS~\cite{chung2022diffusion} and DiffPIR~\cite{zhu2023denoising}. Our method achieves competitive or superior reconstruction fidelity and conditional consistency while avoiding the per-step backpropagation through the diffusion model required by gradient-based posterior sampling approaches.

\paragraph{Contributions.}
In summary, our main contributions are:
\begin{itemize}
\item We provide another interpretation of how conditional diffusion relates to the sampling time via an analytical plug-in term. We also show that our formulation can work with a pre-trained model for inpainting (Table~\ref{tab:pretrained-prior-inpainting}).
\item We derive explicit conditional reverse-time SDE and matched approximate probability-flow ODE formulations with interpretable conditioning dynamics.
\item We establish a theoretical connection between the conditional ODE--SDE discrepancy and a controllable log-Fokker--Planck residual.

\end{itemize}

\section{Background: Conditional and Posterior-Guided Diffusion}

Many applications require sampling from a conditional target distribution $p(x_0|y)$, where $y$ denotes an observation, measurement, or side information. In score-based diffusion models, conditional generation is achieved by approximating the conditional score field that governs the reverse-time dynamics. Existing approaches can broadly be categorized into two families: methods that learn conditional score models directly from paired data, and methods that impose observations on a pretrained diffusion prior during sampling.

\paragraph{Conditional score estimation.}
A common approach is to learn a score model that directly approximates the conditional score $\nabla_x \log p_t(x_t|y)$. In fixed-condition estimators, the condition is kept unchanged throughout the diffusion process and provided directly to the score network. Representative examples include conditional denoising estimators (CDE), where the score model is trained on paired data and conditioned explicitly on the observation.

An alternative is to jointly diffuse both the target and the condition and learn a joint score field. This includes the conditional diffusive estimator (CDiffE) and its multi-speed extension (CMDE)~\cite{batzolis2021conditional}. In these methods, the condition evolves along the diffusion process and conditional generation is recovered by evaluating the learned joint score using a perturbed version of the observation. While joint-diffusion approaches offer greater flexibility, the role of the observation in the reverse dynamics remains largely implicit.

\paragraph{Posterior-guided diffusion sampling.}
A second family of methods uses a pretrained unconditional diffusion model as a prior and imposes the observation only at sampling time. These approaches are particularly attractive for inverse problems because they avoid retraining the diffusion model for each observation operator.

Diffusion Posterior Sampling (DPS)~\cite{chung2022diffusion} follows a Bayesian formulation in which the posterior score is decomposed as

\begin{equation}
\nabla_x \log p_t(x_t|y)
=
\nabla_x \log p_t(x_t)
+
\nabla_x \log p_t(y|x_t).
\label{eq:dps_decomposition}
\end{equation}

The first term is provided by the pretrained diffusion model, while the second is approximated through likelihood-gradient guidance during sampling. DPS has demonstrated strong performance on a wide range of noisy inverse problems, but requires differentiating the likelihood term at every sampling step.

DiffPIR~\cite{zhu2023denoising} adopts a related plug-and-play perspective and combines pretrained diffusion priors with iterative data-consistency updates inspired by optimization-based image restoration. Rather than directly estimating a conditional score, DiffPIR alternates between diffusion-based denoising and observation-consistency corrections.

\paragraph{Our perspective.}
The methods above incorporate observations either implicitly through learned conditional score models or through iterative posterior-guidance procedures applied to pretrained priors. In contrast, we derive an analytic plug-in correction that explicitly separates unconditional generation dynamics from observation-consistency terms. This yields a unified conditioning mechanism that applies both to jointly trained diffusion models and to pretrained diffusion priors, while avoiding the repeated likelihood-gradient evaluations required by posterior-sampling approaches such as DPS.

\section{Our method}

\subsection{Plug-in conditioning with a joint score (Contribution 1)}
\label{subsec:unified-conditioning}
We first introduce a conditioning mechanism that avoids learning a separate
conditional score model for each $Y_0$.
Instead of approximating $\nabla_{x} \log p_t(X_t\mid Y_0)$ directly, we jointly
diffuse $(X_t,Y_t)$ and inject conditioning through an explicit plug-in correction in the $y$-component.

We diffuse the forward pair $(X_t,Y_t)$ with possibly different diffusion intensities:
\begin{equation}
\begin{aligned}
  dX_t &= f_X(X_t,t)\,dt + g_X(t)\,dW^1_t,\\
  dY_t &= f_Y(Y_t,t)\,dt + g_Y(t)\,dW^2_t,
\end{aligned}
\label{eq:joint-forward}
\end{equation}
where $W^1$ and $W^2$ are independent Brownian motions. Let $Z_t=(X_t,Y_t)$,
define $f(Z_t,t)=(f_X,f_Y)$, and set
$G(t)=\mathrm{diag}(g_X(t)I,\,g_Y(t)I)$.

For VE-SDE type forward processes (with zero drift), the marginal noising admits a linear-Gaussian form:
\begin{equation}
\begin{aligned}
X_t &= X_0 + \sigma_X(t)\,z_1,\\
Y_t &= Y_0 + \sigma_Y(t)\,z_2,
\end{aligned}
\end{equation}
where $z_1,z_2\sim\mathcal{N}(0,I)$ are mutually independent and independent of $(X_0,Y_0)$, and
$\sigma_X^2(t)=\int_0^t g_X^2(s)\,ds$, $\sigma_Y^2(t)=\int_0^t g_Y^2(s)\,ds$.
Under independent Gaussian assumptions, we have
$X_t \perp Y_t \mid Y_0$ (Supplementary \textbf{Section 1} for more details), hence
\begin{equation}
  p_t(X_t,Y_t\mid Y_0)
  =
  p_t(X_t\mid Y_0)\,p_t(Y_t\mid Y_0).
  \label{eq:cond-indep}
\end{equation}
Therefore, in conditional generation we simulate the reverse-time dynamics of $(X_t,Y_t)$ conditioned on $Y_0$
and finally return the \(x\)-component at \(t=0\) as a sample from
\(p_0(X_0\mid Y_0)\).

This is a screening-type approximation: once the high-SNR corrupted
observation \(Y_t\) is given, the original observation \(Y_0\) provides only
limited additional information about \(X_t\). This is encouraged by the
multi-speed design, where the condition channel is diffused more slowly than
the target channel. We therefore use
\begin{equation}\label{eq:screening}
p_t(X_t\mid Y_t,Y_0)\approx p_t(X_t\mid Y_t).
\end{equation}
which can be viewed as an approximate conditional-independence condition,
namely that \(X_t\) and \(Y_0\) are nearly independent given \(Y_t\),
\(X_t \perp Y_0\mid Y_t\). By Bayes' rule,
\begin{equation}
p_t(Y_0\mid X_t,Y_t)
=
\frac{p_t(X_t\mid Y_t,Y_0)p_t(Y_0\mid Y_t)}
     {p_t(X_t\mid Y_t)} .
\end{equation}
Therefore, \eqref{eq:screening} implies
\begin{equation}\label{eq:plugin-likelihood}
p_t(Y_0\mid X_t,Y_t)\approx p_t(Y_0\mid Y_t).
\end{equation}
(See Supplementary \textbf{Section 2.2} for more details).

Using Bayes' rule again, the conditional joint score decomposes as
\begin{equation}
\begin{aligned}\label{score}
  \nabla_{(x,y)}\log p_t(X_t,Y_t\mid Y_0)
  &=\nabla_{(x,y)}\log p_t(X_t,Y_t)\\
  &\quad+\nabla_{(x,y)}\log p_t(Y_0\mid X_t,Y_t)\\
  &\approx \nabla_{(x,y)}\log p_t(X_t,Y_t)\\
  &\quad+ \bigl(0,\nabla_{y}\log p_t(Y_0\mid Y_t)\bigr),
\end{aligned}
\end{equation}
where \((x,y)\) denotes a realization of \((X_t,Y_t)\) and
\(\nabla_{(x,y)}\) is the gradient with respect to \((x,y)\).

We approximate the unconditional joint score $\nabla_{(x,y)}\log p_t(X_t,Y_t)$
with a single neural network
\begin{equation}
  s_\theta(X_t,Y_t,t)\approx \nabla_{(x,y)}\log p_t(X_t,Y_t).
  \label{eq:joint-score-net}
\end{equation}
Conditioning is injected through the approximate plug-in correction term
\begin{equation}\label{eq:conditonal_term}
\begin{aligned}
c_y(t;Y_0,Y_t)\;:& \approx  \;\nabla_y\log p_t(Y_0\mid Y_t)\\
&=\frac{Y_0-Y_t}{\sigma_Y^2(t)},
\end{aligned}
\end{equation}
which admits an expression under Gaussian forward corruption
(Supplementary \textbf{Section 3} for more details). 

\paragraph{Pretrained-prior instantiation.}
The same plug-in principle can also be used when a joint score model is not
available. Suppose instead that we only have a pretrained unconditional
diffusion prior
\begin{equation}
s_\theta(X_t,t)\approx \nabla_x\log p_t(X_t).
\end{equation}
In this case, the observation is imposed directly on the noisy state \(X_t\).
For the inpainting case, the condition is simply a partial observation of the target:
\begin{equation}
Y_0 = M\odot X_0,
\end{equation}
where \(M\) is the binary observation mask. By Bayes' rule,
\begin{equation}
\nabla_x\log p_t(X_t\mid Y_0)
=
\nabla_x\log p_t(X_t)
+
\nabla_x\log p_t(Y_0\mid X_t),
\end{equation}
since \(p_t(Y_0)\) is independent of \(X_t\). The pretrained model provides the
first term, while the second term is approximated by a noisy-state
data-consistency correction. 
Specifically, we use the surrogate likelihood
\begin{equation}
p_t(Y_0\mid X_t)
\approx
\mathcal N(M\odot X_t,\sigma_X^2(t) I),
\end{equation}
which yields
\begin{equation}
\label{eq:pretrained-plugin}
\nabla_x\log p_t(Y_0\mid X_t)
\approx
M\odot
\frac{Y_0-M\odot X_t}{\sigma_X^2(t)} =
\frac{Y_0-Y_t}{\sigma_X^2(t)} .
\end{equation} 
Here \(M\odot M=M\).
This inpainting pretrained-prior case is closely related to the plug-in
conditioning mechanism above, and serves as a useful case. In the joint setting, the plug-in term acts on the condition
channel \(Y_t\), and \(X_t\) is influenced indirectly through the dependence of
the joint score component \(s_x(X_t,Y_t,t)\) on \(Y_t\). In the pretrained-prior
setting, there is no separately learned condition channel. Instead, for
inpainting, the condition is part of the target itself, since
\(Y_0=M\odot X_0\). Thus the plug-in correction can be applied directly to the
observed part of the noisy target state \(X_t\), equivalently to
\(Y_t=M\odot X_t\). Both cases follow the same principle: a learned diffusion
score is combined with an analytic observation-consistency correction at
sampling time.

\subsection{Plug-in conditional reverse dynamics (Contribution 2)}
\label{subsec:cond-dynamics}

We now derive the conditional reverse-time SDE and probability flow ODE induced by the unified conditioning in~\eqref{score}.
Let $Z_t=(X_t,Y_t)$ follow the joint forward SDE~\eqref{eq:joint-forward},

\subsubsection{Reverse-time SDE with plug-in conditioning}
The conditional reverse-time SDE that generates samples from the conditional marginals
$p^\star_t(\cdot,\cdot\mid Y_0)$, integrated backward in time from $t=T$ to $t=0$, is
\begin{equation}
dZ_t=
\Bigl(f(Z_t,t) - G(t)G(t)^\top \tilde s(Z_t,t;Y_0)\Bigr)\,dt
+ G(t)\,d\bar W_t .
\label{eq:cond-reverse-sde}
\end{equation}
where $\bar W_t$ is a reverse-time Brownian motion.

Writing \eqref{eq:cond-reverse-sde} in components yields
\begin{equation}
\left\{
\begin{aligned}
dX_t &= \Bigl(f_X(X_t,t) - g_X^2(t)\,\nabla_x\log p_t(X_t,Y_t)\Bigr)\,dt \\&+ g_X(t)\,d\bar W_t^{\,1},\\
dY_t &= \Bigl(f_Y(Y_t,t) - g_Y^2(t)\,\bigl[\nabla_y\log p_t(X_t,Y_t) \\&+ c_y(t;Y_0,Y_t)\bigr]\Bigr)\,dt + g_Y(t)\,d\bar W_t^{\,2}.
\end{aligned}
\right.
\label{eq:cond-reverse-sde-components}
\end{equation}
Here $p_t(X_t,Y_t)$ is the unconditional joint density used in the score terms, and in general $p_t^\star(\cdot,\cdot\mid Y_0)\neq p_t(\cdot,\cdot)$.


\subsubsection{Probability flow ODE with plug-in conditioning}
\label{Probability flow ODE with plug-in conditioning}
We view Fokker-Planck as a continuity argument, the probability flow ODE~\cite{anderson1982reverse} whose marginals exactly match $p^\star(\cdot,\cdot,t\mid Y_0)$ is
\begin{equation}
\begin{aligned}
\frac{dZ_t}{dt}
=f(Z_t,t) - 
\frac12\,G(t)G(t)^\top \nabla \log p^\star(Z_t,t\mid Y_0),
\label{eq:cond-pf-ode-pc-exact}
\end{aligned}
\end{equation}
However, $p^\star(\cdot,\cdot,t\mid Y_0)$ is intractable. 
We approximate its conditional score by a plug-in correction
that affects only the $y$-channel:
\begin{equation}
\begin{aligned}
\nabla_{(x,y)} \log p^\star(Z_t,t\mid Y_0)
\;&\approx\;
\nabla_{(x,y)} \log p_t(Z_t,t) \\\;+&\; (0,\;c_y(t;Y_0,Y_t)),
\end{aligned}
\label{eq:plugin-score-approx}
\end{equation}
where $c_y(t;Y_0,Y_t)\approx \nabla_y \log p_t(Y_0\mid Y_t)$.

Substituting \eqref{eq:plugin-score-approx} into \eqref{eq:cond-pf-ode-pc-exact} yields an implementable surrogate probability-flow ODE:
\begin{equation}
\begin{aligned}
\frac{dZ_t}{dt}
=&
\Bigl(f(Z_t,t)-G(t)G(t)^\top \tilde s(Z_t,t;Y_0)\Bigr)\\
+&\frac12\,G(t)G(t)^\top \bigl(\nabla\log p_t(Z_t,t)+c_y(t;Y_0,Y_t)\bigr).
\end{aligned}
\label{eq:cond-pf-ode-like-surrogate}
\end{equation}

i.e.,
\begin{equation}
\left\{
\begin{aligned}
\frac{dX_t}{dt} &= f_X(X_t,t) - \frac12\,g_X^2(t)\,\nabla_x\log p_t(X_t,Y_t),\\
\frac{dY_t}{dt} &= f_Y(Y_t,t) - \frac12\,g_Y^2(t)\,\bigl[\nabla_y\log p_t(X_t,Y_t)\\ &+ c_y(t;Y_0,Y_t)\bigr].
\end{aligned}
\right.
\label{eq:cond-pf-ode-components}
\end{equation}
\paragraph{Neural implementation.}
In practice, we approximate the unconditional joint score by a single network $s_\theta(X_t,Y_t,t)\approx \nabla_{(x,y)}\log p_t(X_t,Y_t)$ and inject conditioning by
\begin{equation}\label{eq:plugin-corrected-score}
\tilde s_\theta(X_t,Y_t,t;Y_0)
:=
s_\theta(X_t,Y_t,t)+(0,c_y(t;Y_0,Y_t)).
\end{equation}
Simulating \eqref{eq:cond-reverse-sde} or \eqref{eq:cond-pf-ode-components} from $t=T$ to $t=0$ and discarding $Y_t$ yields a valid conditional sample from $p(X_t\mid Y_0)$. Algorithm~\ref{alg:unified-conditional-sampling} summarizes the unified sampling procedure.

\subsection{Log-Fokker--Planck residual regularization (Contribution 3)}
\label{subsec:residual}

From subsection~\ref{Probability flow ODE with plug-in conditioning}, we obtain an approximate conditional probability-flow ODE and a matched reverse-time SDE driven by the same corrected score. In practice, the resulting ODE sampler may still lag behind its SDE counterpart in sample quality \cite{song2020score}. To reduce this discrepancy, 
we regularize the learned dynamics by encouraging the potential $u_\theta(x,y,t)$ to approximately satisfy the joint log-Fokker--Planck equation.
This log-FP residual regularization helps close the distributional gap between the plug-in ODE and SDE samplers, thereby improving the quality of the approximation probability-flow ODE sampling.
\paragraph{Log-FP operator.}
Let $u(x,y,t)=\log p(x,y,t)$ denote the log-density of the joint forward process~\eqref{eq:joint-forward}.
Under plug-in conditioning, the conditional reverse dynamics are driven by the corrected score
$\tilde s_\theta$, which corresponds to a corrected log-Fokker--Planck operator (see Supplementary \textbf{Section 4} for more details)
\begin{equation}
\mathcal{F}_{Y_0}[u]
:=\mathcal{F}[u]
- g_Y^2(t)\Bigl(\nabla_y\!\cdot c_y(t;Y_0,Y_t) + c_y(t;Y_0,Y_t)\cdot\nabla_y u\Bigr).
\label{eq:logfp-operator-corrected}
\end{equation}
where 
\begin{equation}
\begin{aligned}
\mathcal{F}[u]
&:=
\partial_t u
+\nabla_x\!\cdot f_X + f_X\!\cdot\nabla_x u
+\nabla_y\!\cdot f_Y + f_Y\!\cdot\nabla_y u\\
-&\frac12 g_X^2(t)\!\left(\Delta_x u+\|\nabla_x u\|^2\right)
-\frac12 g_Y^2(t)\!\left(\Delta_y u+\|\nabla_y u\|^2\right).
\label{eq:logfp-operator}
\end{aligned}
\end{equation}

\paragraph{Normalized residual.}
We measure the violation of the log-FP equation by the (time-averaged) $L^2$ residual
\begin{equation}
\mathcal{R}_{\mathrm{LFP}}(\theta,t)
:= V(T-t)^{-1}\int_t^T \|\mathcal{F}_{Y_0}[u_\theta](\cdot,\cdot,s)\|_{L^2(\Omega)}^2\,ds,
\label{eq:logfp-residual}
\end{equation}
where $V(r):=r\,\mathrm{Vol}(\Omega)$ and $\Omega=\Omega_X\times\Omega_Y$.
All derivatives in $\mathcal{F}[u_\theta]$ are computed by automatic differentiation; when needed, $\Delta u_\theta$ is estimated by the Hutchinson trace estimator.

\paragraph{Training objective.}
We augment the joint DSM loss with the log-FP residual regularizer:
\begin{equation}
\mathcal{L}_{\mathrm{total}}(\theta)
=
\mathcal{L}_{\mathrm{DSM}}^{\mathrm{joint}}(\theta)
+\alpha\,\mathbb{E}_{t\sim\mathrm{Unif}(0,T)}\bigl[\mathcal{R}_{\mathrm{LFP}}(\theta,t)\bigr].
\label{eq:loss-total-res}
\end{equation}

\paragraph{Closing the ODE--SDE gap.}
Under standard smoothness and growth assumptions (Supplementary \textbf{Section 4}),
a small residual implies that the approximation probability-flow ODE dynamics stay close to the reverse-time SDE dynamics in distribution:
\begin{theorem}[Residual controls the ODE--SDE gap]\label{thm:gap3}
For any $t\in[0,T]$, if $\mathcal{R}_{\mathrm{LFP}}(\theta,t)\le \delta$, then
\begin{equation}
W_2\!\left(p_\theta^{\mathrm{ODE}}(\cdot,\cdot,t),\,p_\theta^{\mathrm{SDE}}(\cdot,\cdot,t)\right)
\le C\,\delta,
\end{equation}
where $C>0$ is independent of $\delta$.
\end{theorem}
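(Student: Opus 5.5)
The plan is to compare both samplers with a common reference path of densities and to keep every estimate at the level of the squared residual, so that the final bound is linear in $\delta=\mathcal{R}_{\mathrm{LFP}}$ and not in $\sqrt\delta$. Set $q_\theta:=e^{u_\theta}$ and $r:=\mathcal{F}_{Y_0}[u_\theta]$. Dividing the corrected Fokker--Planck equation by $q$ gives the log-form (Supplementary Section 4); reversing that step shows that $q_\theta$ solves the conditional Fokker--Planck equation with a multiplicative source:
\begin{equation*}
\partial_t q_\theta+\nabla\cdot\bigl(q_\theta\, b^{\mathrm{SDE}}_\theta\bigr)-\tfrac12\nabla\cdot\bigl(GG^\top\nabla q_\theta\bigr)=q_\theta\, r .
\end{equation*}
Here $b^{\mathrm{SDE}}_\theta$ is the drift of \eqref{eq:cond-reverse-sde} evaluated with the corrected score $\tilde s_\theta=\nabla u_\theta+(0,c_y)$. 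Since $\nabla\cdot(GG^\top\nabla q)=\nabla\cdot(q\,GG^\top\nabla\log q)$, the same equation can be rewritten as a continuity equation $\partial_t q_\theta+\nabla\cdot(q_\theta v_\theta)=q_\theta r$, where $v_\theta$ is exactly the velocity of the surrogate ODE \eqref{eq:cond-pf-ode-components}. Consequently, when $r\equiv0$, $q_\theta$ is simultaneously the law of the ODE and of the SDE, and the gap vanishes. Both samplers start from the same terminal law at $T$.

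\textbf{Step 1 (ODE versus $q_\theta$).} The ODE law $p^{\mathrm{ODE}}_\theta$ solves the same continuity equation without the source. The difference $e=p^{\mathrm{ODE}}_\theta-q_\theta$ therefore satisfies $\partial_t e+\nabla\cdot(e\,v_\theta)=-q_\theta r$. I would control $W_2$ through the weighted $\dot H^{-1}(q_\theta)$ norm, using the standard comparison $W_2(\mu,\nu)\lesssim\|\mu-\nu\|_{\dot H^{-1}(\nu)}$, which is valid under the lower density bounds on $\Omega$ assumed in Supplementary Section 4. The source $q_\theta r$ is then represented as $\nabla\cdot(q_\theta\nabla\phi_s)$, where $\phi_s$ solves the elliptic problem $\nabla\cdot(q_\theta\nabla\phi_s)=q_\theta r(\cdot,s)$.

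\textbf{Step 2 (SDE versus $q_\theta$).} The SDE law solves the full Fokker--Planck equation without the source, so it differs from $q_\theta$ by a perturbation driven by the same $q_\theta r$. Here the diffusion term helps, through an energy or relative-entropy estimate.

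\textbf{Step 3 (combine the two comparisons).} I would use the triangle inequality, together with Gronwall's inequality backward from $T$. The Lipschitz bound on $v_\theta$ gives a factor $e^{L(T-t)}$.

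\textbf{Main obstacle (order of the residual).} Done naively, Steps 1--2 give $W_2\le C\int_t^T\|r\|_{L^2}\,ds\le C\sqrt{V(T-t)\,\delta}$, which is only of order $\sqrt\delta$. To reach the stated $C\delta$, I would not treat $r$ as a first-order forcing. Instead I would track the score mismatch $\varepsilon:=\nabla\log p^{\mathrm{SDE}}_\theta-\tilde s_\theta$. The ODE and SDE drifts differ precisely by $\tfrac12GG^\top\varepsilon$, and the choice of $W_2$ enters only through this mismatch.

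\begin{itemize}
\item From the log-Fokker--Planck structure, $w=\log p^{\mathrm{SDE}}_\theta-u_\theta$ solves a linear parabolic equation. Its source is $-r$ plus the quadratic term $\tfrac12\|G\nabla w\|^2$.
\item Testing this equation against $p^{\mathrm{SDE}}_\theta$, the dissipation $\tfrac12\int p\,\|G\nabla w\|^2$ absorbs the cross term by Young's inequality. This yields $\int_t^T\!\!\int p\,\|G\varepsilon\|^2\le C\int_t^T\|r\|^2_{L^2}=C\,V(T-t)\,\delta$.
\item I would then couple the ODE and SDE flows synchronously and estimate $\tfrac{d}{dt}W_2^2$ along this coupling.
\end{itemize}

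The hardest point is to arrange this last step so that the mismatch enters only through its squared, averaged norm and never through its square root. This is what a final Gronwall step needs in order to close linearly in $\delta$. My first attempt would use the growth and smoothness assumptions of Supplementary Section 4 to bound $\varepsilon$ pointwise by the residual. If that pointwise bound fails, the argument as planned proves only $W_2\le C\sqrt\delta$, and the stated $C\delta$ would need an additional assumption.
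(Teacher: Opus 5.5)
\textbf{The identity your plan rests on fails once the plug-in term is present.} With $q_\theta=e^{u_\theta}$ you have $\nabla\log q_\theta=\nabla u_\theta$, not $\tilde s_\theta=\nabla u_\theta+(0,c_y)$. So the continuity form of the corrected Fokker--Planck equation has velocity $b^{\mathrm{SDE}}_\theta+\tfrac12GG^\top\nabla u_\theta=f-\tfrac12GG^\top\nabla u_\theta-g_Y^2(0,c_y)$. (With the sign that matches the log-form $\mathcal F_{Y_0}$, the diffusion term enters with $+\tfrac12$, not $-\tfrac12$.) The surrogate ODE \eqref{eq:cond-pf-ode-components}, however, carries only $-\tfrac12 g_Y^2c_y$ in its $y$-drift.

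Equivalently, $e^{u}$ is the law of the plug-in SDE driven by $\nabla u+(0,c_y)$ iff $\mathcal F[u]-g_Y^2(\nabla_y\cdot c_y+c_y\cdot\nabla_y u)=0$. It is the law of the plug-in ODE iff $\mathcal F[u]-\tfrac12 g_Y^2(\nabla_y\cdot c_y+c_y\cdot\nabla_y u)=0$. In the unconditional case these two operators coincide, which is exactly why a single residual controls the gap there; here they do not.

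This has three consequences. First, $r\equiv0$ makes $q_\theta$ the SDE law but not the ODE law. Second, your Step~1, that the ODE law solves the same continuity equation without the source, is false. Third, in your final bullets $\varepsilon=\nabla w-(0,c_y)$. The energy estimate controls only $\nabla w$; at $r\equiv0$ it gives $w\equiv0$ while $\varepsilon=-(0,c_y)\neq0$. That is a drift discrepancy $\tfrac12 g_Y^2c_y$ which no residual bound can absorb. With $\mathcal F_{Y_0}$ and the ODE as defined, zero residual does not force a zero gap.

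\textbf{Your fallback for the rate would not help either.} $\mathcal R_{\mathrm{LFP}}$ averages the \emph{squared} residual, so even a pointwise bound $|\varepsilon|\le C|r|$ only gives $\int_t^T\|\varepsilon\|_{L^2}\,ds\le C\sqrt{V(T-t)\,\delta}$. The energy estimate likewise yields only $\|e\|_{L^2}^2+\int\|\nabla e\|_{L^2}^2\lesssim\delta$. The rate $\sqrt\delta$ is genuinely the natural one, already without conditioning. Take $f=0$, $g=1$ in one dimension and $u_\theta=\log\mathcal N(0,a(t))$ with $a'=1+\epsilon$. The residual is $\frac{\epsilon}{2a}(x^2/a-1)$, so $\delta\sim\epsilon^2$. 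Yet the ODE and SDE variances, started from $\mathcal N(0,a(T))$, separate at order $\epsilon$, so $W_2$ is of order $\epsilon=\sqrt\delta$.

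\textbf{The paper's proof does not supply these pieces.} It is essentially the route of your bullet list without the $q_\theta$ detour. It runs an energy estimate for $e=\bar u_\theta-\bar u_\theta^{\mathrm{SDE}}$ with shared terminal and Dirichlet data. It remarks that the $c_y$ terms enter both corrected log-Fokker--Planck equations in the same way. It then defers to the unconditional $W_2$ argument. That remark compares only $u_\theta$ with $u_\theta^{\mathrm{SDE}}$. The ODE-versus-SDE step it defers is precisely where the weights $\tfrac12$ and $1$ differ. Its Step~I also asserts $\|e\|_{L^2}<\tilde C\delta$ with no argument that beats $\sqrt\delta$.

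What your energy-plus-coupling argument can honestly deliver is $W_2\le C\sqrt\delta$. Even that holds only after the plug-in enters the ODE and the residual with the same weight. One way is to take the residual of a potential whose gradient is the full $\tilde s_\theta$ and whose exponential equals $\pi_T$ at $T$.
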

The proof follows an energy estimate for the joint log-Fokker--Planck equation.
Moreover, the plug-in terms cancel in the ODE--SDE comparison because the same correction
$-g_Y^2(\nabla_y\!\cdot c_y + c_y\cdot\nabla_y u)$ enters both the corrected log-FP equations; see Supplementary \textbf{Section 4}.

\begin{algorithm}[tb]
  \caption{Unified conditional sampling with a shared joint score (plug-in in the $y$-channel)}
  \label{alg:unified-conditional-sampling}
  \begin{algorithmic}
    \STATE {\bfseries Input:} observation $Y_0$, time grid $\{t_k\}_{k=0}^K$, joint score $s_\theta$, sampler $\in\{\mathrm{SDE},\mathrm{ODE}\}$
    \STATE {\bfseries Init:} sample $Z_T=(X_T,Y_T)\sim \pi_T$, 
where $\pi_T := \mathcal{N}(0,\sigma_x^2(T)I)\times \mathcal{N}(0,\sigma_y^2(T)I)$
\STATE {\bfseries (Inpainting)} Let $M$ be the binary mask (observed region $=1$). Set $Y_0 \leftarrow M\odot Y_0$ and initialize/project $y_T \leftarrow M\odot y_T$
(and inject $y$-noise only on $M$)
    \FOR{$k=K$ {\bfseries down to} $1$}
      \STATE $t \leftarrow t_k$, $\Delta t \leftarrow t_k - t_{k-1}$, $Z_t=(X_t,Y_t)$
      \STATE $(s_x,s_y) \leftarrow s_\theta(X_t,Y_t,t)$ \hfill // $s_x \approx \nabla_x\log p_t(X_t,Y_t)$ depends on $Y_t$
      \STATE $c_y \leftarrow \nabla_{y}\log p_t(Y_0\mid Y_t)$ \hfill // plug-in acts \emph{only} on $y$ (Eq.~\eqref{eq:conditonal_term})
      \STATE $\tilde s_x \leftarrow s_x$;\quad $\tilde s_y \leftarrow s_y + c_y$ \hfill // corrected score $\tilde s=(\tilde s_x,\tilde s_y)$ (Eq.~\eqref{eq:plugin-corrected-score})
      \IF{sampler = $\mathrm{SDE}$}
        \STATE $z_{t-\Delta t} \leftarrow \textsc{RevSDEStep}(z_t,t,\Delta t,(\tilde s_x,\tilde s_y))$
      \ELSE
        \STATE $z_{t-\Delta t} \leftarrow \textsc{FlowODEStep}(z_t,t,\Delta t,(\tilde s_x,\tilde s_y))$
      \ENDIF
      \STATE \hfill // Although $c_y$ does not enter $\tilde s_x$ explicitly, it influences $x$ indirectly through $Y_t$ in $s_x(X_t,Y_t,t)$.
    \ENDFOR
    \STATE {\bfseries Output:} $X_0$ (discard the $Y_t$-trajectory)
  \end{algorithmic}
\end{algorithm}

\section{Experiments}
\subsection{Conditional generation: method comparison}
\label{subsec:conditional-generation}

We evaluate the proposed plug-in conditioning mechanism on two conditional
generation tasks: inpainting and super-resolution.
The goal is to assess whether the proposed plug-in strategy
can consistently incorporate different forms of observed conditions into
score-based generative sampling.

\paragraph{Tasks and metrics.}
We consider two inverse problems in which the condition is a degraded version of
the target image. 
For \emph{inpainting}, the condition is produced by applying
randomly placed square masks covering \(25\%\) of the image area. For
\emph{super-resolution}, the condition is generated by bicubic downsampling, and
we perform \(8\times\) super-resolution from \(16\times16\) to \(128\times128\).

We report PSNR and SSIM~\cite{wang2004image} to measure distortion-level
fidelity. PSNR measures pixel-wise agreement with the ground truth, while SSIM captures local structural similarity beyond pixel-wise errors. To evaluate
perceptual quality, we also report LPIPS~\cite{zhang2018unreasonable}, which
computes distances in a deep feature space and is more aligned with human perceptual similarity than pixel-wise errors. Finally, we report joint Fr\'echet Inception
Distance (JFID)~\cite{batzolis2021conditional}, which applies the Fr\'echet distance to paired generated
samples and their corresponding conditions. This metric is included because
conditional generation should be evaluated not only by marginal image realism,
but also by consistency with the given observation.

All metrics are computed on the CelebA~\cite{liu2015deep} test set using the same number of
generated samples per condition. We follow the preprocessing protocol of
\citet{batzolis2021conditional}. For fairness, all score-based estimators use
the same backbone architecture, training data, preprocessing pipeline, sampling
budget, and evaluation protocol.

\paragraph{Baselines.}
We compare against four representative conditional generation strategies:
(i) CDE~\cite{song2020score}, a fixed-condition estimator that diffuses only
\(X_t\) while keeping \(Y\equiv Y_0\) fixed;
(ii) CDiffE~\cite{batzolis2021conditional}, a joint-diffusion estimator with
matched noise schedules for \((X_t,Y_t)\);
(iii) CMDE~\cite{batzolis2021conditional}, a multi-speed joint-diffusion
estimator where the condition channel is diffused with a smaller noise scale;
and (iv) HCFLOW~\cite{liang2021hierarchical}, a competitive task-specific
baseline for super-resolution.

\begin{figure*}[t]
\begin{center}
\includegraphics[width=0.7\linewidth]{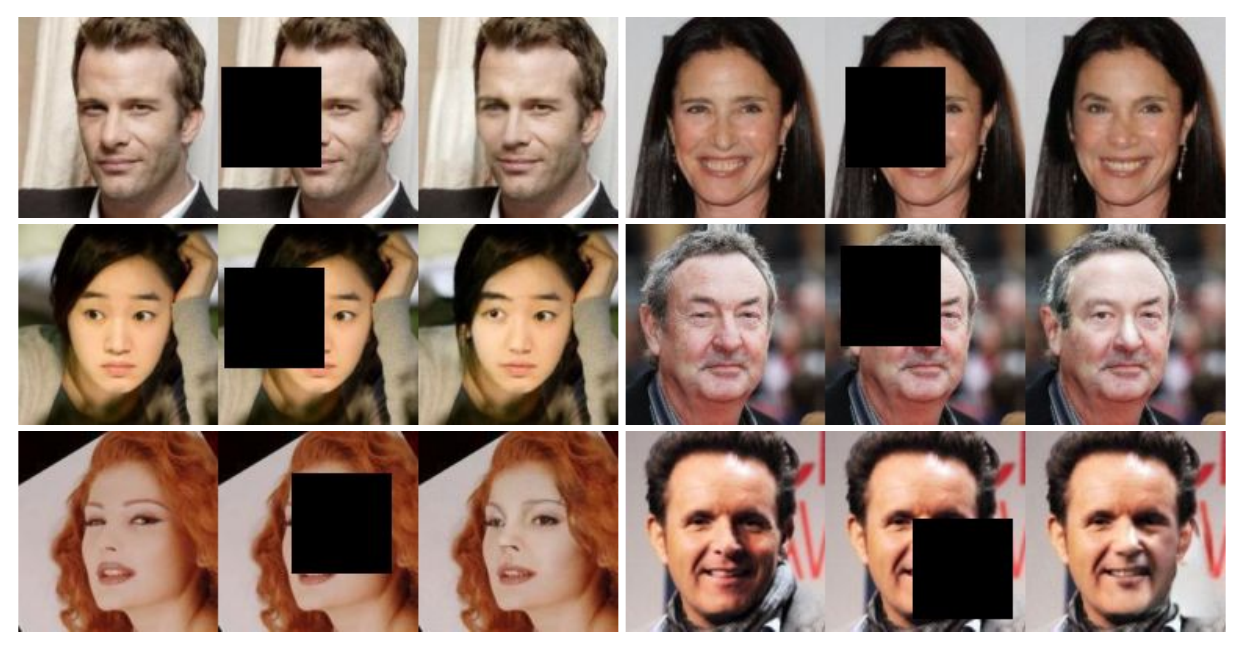}
\end{center}
\caption{\textbf{Inpainting on CelebA.}
For each example, we show the ground truth, the masked observation,
and samples generated by our method from left to right.}
\label{fig:impainting}
\end{figure*}

\paragraph{Noise schedules and plug-in correction.}
We use the VE-SDE parameterization for all score-based estimators. Unless
otherwise stated, the target channel uses \(\sigma_X(t)= 25\), while the
condition channel uses a smaller diffusion scale \(\sigma_Y(t)= 2.5\).
For the joint-diffusion baselines, we follow their original definitions:
CDiffE uses matched schedules with \(\sigma_Y(t)=\sigma_X(t)\), while CMDE uses
a reduced condition-channel noise scale~\cite{batzolis2021conditional}. Our
method uses the same multi-speed joint diffusion as CMDE, but differs at
sampling time by adding the analytic plug-in correction in the \(y\)-channel.
This isolates the effect of the proposed conditioning mechanism from the effect
of the noise schedule itself.

\begin{table}[t]
  \caption{Conditional generation results on CelebA inpainting. Lower LPIPS and
  JFID are better; higher PSNR and SSIM are better. All score-based methods use
  the same backbone and evaluation protocol.}
  \label{tab:conditional-results}
  \begin{center}
    \begin{small}
      \begin{sc}
        \begin{tabular}{lcccc}
          \toprule
          Method & PSNR $\uparrow$ & SSIM $\uparrow$ & LPIPS $\downarrow$ & JFID $\downarrow$ \\
          \midrule
          CDE    & 25.12 & 0.870 & \textbf{0.042} & 18.06 \\
          CDiffE & 23.07 & 0.844 & 0.057 & 19.25 \\
          CMDE   & 24.92 & 0.864 & 0.044 & 17.07 \\
          Ours   & \textbf{25.38} & \textbf{0.876} & 0.046 & \textbf{15.99} \\
          \bottomrule
        \end{tabular}
      \end{sc}
    \end{small}
  \end{center}
  \vskip -0.1in
\end{table}

\begin{table}[t]
  \caption{Conditional generation results on CelebA \(8\times\) super-resolution.
  Our method achieves the best PSNR, LPIPS, and JFID, while HCFLOW obtains the
  highest SSIM.}
  \label{tab:conditional-results-sr}
  \begin{center}
    \begin{small}
      \begin{sc}
        \begin{tabular}{lcccc}
          \toprule
          Method & PSNR $\uparrow$ & SSIM $\uparrow$ & LPIPS $\downarrow$ & JFID $\downarrow$ \\
          \midrule
          CDE    & 23.80 & 0.650 & 0.114 & 15.77 \\
          CDiffE & 23.83 & 0.656 & 0.139 & 20.20 \\
          CMDE   & 23.91 & 0.654 & 0.109 & 15.68 \\
          HCFLOW & 24.95 & \textbf{0.702} & 0.107 & 19.55 \\
          Ours   & \textbf{25.12} & 0.672 & \textbf{0.098} & \textbf{14.56} \\
          \bottomrule
        \end{tabular}
      \end{sc}
    \end{small}
  \end{center}
  \vskip -0.1in
\end{table}
\begin{figure*}[t]
\begin{center}
\includegraphics[width=0.7\linewidth]{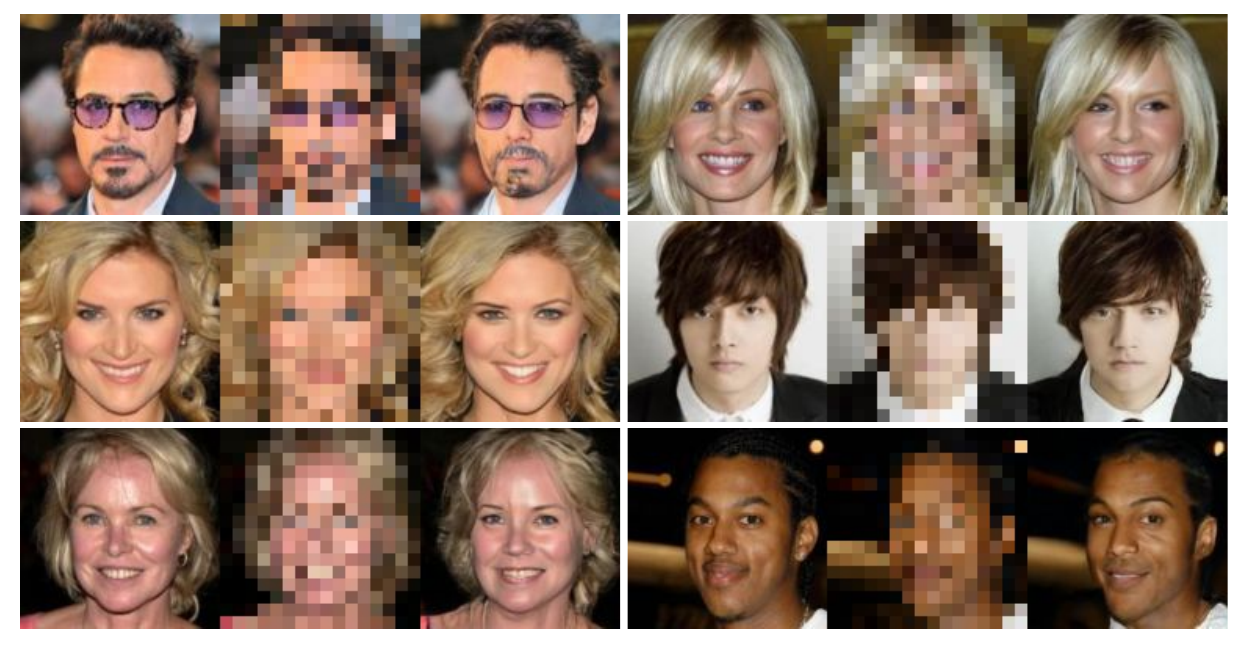}
\end{center}
\caption{\textbf{Super-resolution on CelebA.}
From left to right: ground-truth high-resolution images, low-resolution
observations, and samples generated by our method.}
\label{fig:super-resolution}
\end{figure*}

\paragraph{Results.}
Tables~\ref{tab:conditional-results} and~\ref{tab:conditional-results-sr}
summarize the quantitative results for inpainting and super-resolution,
respectively. Figures~\ref{fig:impainting} and~\ref{fig:super-resolution} show qualitative visual results for inpainting and super-resolution. On inpainting, our method achieves the best PSNR, SSIM, and JFID.
Although CDE obtains a slightly lower LPIPS, our method improves JFID from
\(18.06\) to \(15.99\), indicating better joint realism and consistency with the
observed condition. Compared with CMDE, which uses a similar multi-speed
diffusion setup but without the plug-in correction, our method improves both
distortion-level fidelity and JFID. This suggests that the gain is not only due
to using a smaller noise scale in the condition channel, but also due to the
analytic observation-consistency correction during sampling.

On super-resolution, our method achieves the best PSNR, LPIPS, and JFID.
HCFLOW obtains the highest SSIM, but its JFID is substantially worse than ours,
suggesting that stronger structural similarity does not necessarily imply
better conditional sample realism. Across both tasks, our method consistently
achieves the lowest JFID. This supports the main claim that plug-in conditioning
improves the alignment between generated samples and their observations while
maintaining competitive perceptual quality.

\subsection{Plug-in correction with a pretrained prior}
\label{subsec:pretrained-prior}

We next evaluate whether the proposed plug-in correction can be used with an
existing pretrained diffusion prior, without retraining the score network. The
pretrained diffusion model provides an unconditional image prior, while the
plug-in term is imposed only at sampling time.

\paragraph{Tasks and metrics.} We consider noisy random inpainting on \(256\times256\) images from FFHQ~\cite{karras2019style} and ImageNet256~\cite{russakovsky2015imagenet}. The mask is
sampled independently at the pixel level. For each image, the masking
probability is drawn uniformly from \([0.3,0.7]\), so that the missing ratio
varies across test samples. The observation is a masked image corrupted by
Gaussian noise with standard deviation \(\sigma_X(t)=0.05\). All methods are
evaluated using the same pretrained diffusion prior checkpoint, the same test
images, the same random masks and perturbations, and the same sampling budget
with \(\mathrm{NFE}=1000\). Metrics are computed on the same 100 validation
images. We report LPIPS, PSNR, and JFID as representative measures of
perceptual similarity, reconstruction fidelity, and conditional consistency,
respectively.

The main controlled comparison is with DPS~\cite{chung2022diffusion} and DiffPIR~\cite{zhu2023denoising}, since all methods use
the same pretrained prior checkpoint from DPS. In DPS, the likelihood term is
differentiated at every sampling step by backpropagating through the pretrained
U-Net, so it is marked as ``Grad.'' in Table~\ref{tab:pretrained-prior-inpainting}. DiffPIR and our
method do not require this U-Net backpropagation during sampling. DiffPIR uses a
closed-form proximal/denoising update for the data-consistency step, while our
method applies an analytic plug-in correction directly to the observed pixels.
Thus both methods incorporate the observation without computing gradients
through the pretrained score network. This avoids repeated gradient computation
through the pretrained U-Net and can therefore reduce the sampling overhead.

\begin{table}[t]
  \caption{Noisy random inpainting with a pretrained diffusion prior. All
  methods use the same pretrained prior, masks, perturbations, test images, and
  sampling budget \((\mathrm{NFE}=1000)\). Metrics are computed using \(100\)
  samples. We report LPIPS, PSNR, and JFID as representative measures of
  perceptual similarity, reconstruction fidelity, and conditional consistency,
  respectively.}
  \label{tab:pretrained-prior-inpainting}
  \begin{center}
    \begin{small}
      \begin{tabular}{llcccc}
        \toprule
        Dataset & Method & Grad. & LPIPS $\downarrow$ & PSNR $\uparrow$ & JFID $\downarrow$ \\
        \midrule
        FFHQ & DPS~\cite{chung2022diffusion}    & Yes & 0.2001 & 29.392 & 26.1210 \\
        FFHQ & DiffPIR~\cite{zhu2023denoising} & No & 0.2097 & 26.577 & \textbf{18.8032} \\
        FFHQ & Ours   & No & \textbf{0.1984} & \textbf{29.916} & 20.1224 \\
        \midrule
        ImageNet256 & DPS~\cite{chung2022diffusion}    & Yes & 0.3390 & 26.166 & 44.0984 \\
        ImageNet256 & DiffPIR~\cite{zhu2023denoising} & No & \textbf{0.2259} & 25.505 & 17.3510 \\
        ImageNet256 & Ours   & No & 0.2272 & \textbf{28.122} & \textbf{16.7309} \\
        \bottomrule
      \end{tabular}
    \end{small}
  \end{center}
  \vskip -0.1in
\end{table}

\paragraph{Results.}
Table~\ref{tab:pretrained-prior-inpainting} reports the results. Figure~\ref{fig:imagenet-inpainting} shows qualitative results on ImageNet256. Under the same
pretrained prior and sampling budget, our method consistently improves over DPS
in both reconstruction fidelity and conditional consistency. On FFHQ, it
improves PSNR from \(29.39\) to \(29.92\) and reduces JFID from \(26.12\) to
\(20.12\). On ImageNet256, the gain is more pronounced: JFID decreases from
\(44.10\) to \(16.73\), while PSNR increases from \(26.17\) to \(28.12\). These
improvements are obtained without the per-step U-Net backpropagation required by
DPS.

Compared with DiffPIR, our method is competitive rather than uniformly better
on every metric. DiffPIR achieves the best JFID on FFHQ, whereas our method
obtains the best LPIPS and PSNR. On ImageNet256, our method achieves the best
PSNR and JFID, while remaining close to DiffPIR in LPIPS. Our
framework offers a distinct perspective on conditional generation by viewing
conditioning as an analytic plug-in correction to a pretrained diffusion prior.
The competitive results suggest that this interpretation is both practically
effective and conceptually meaningful.
\begin{figure*}[t]
\begin{center}
\includegraphics[width=0.8\linewidth]{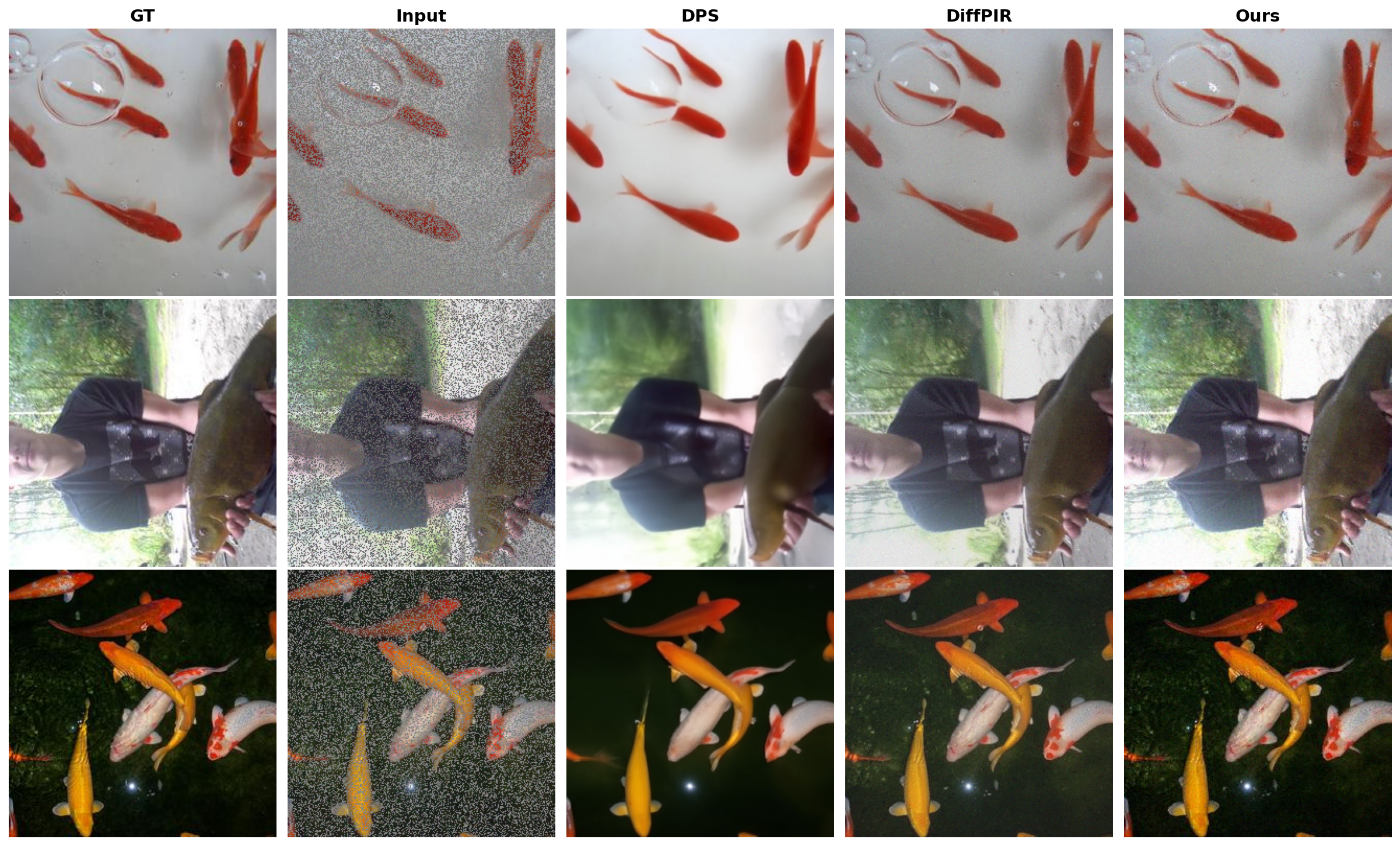}
\end{center}
\caption{\textbf{Qualitative comparison on ImageNet256 noisy random inpainting.}
From left to right: ground truth, noisy observation, DPS, DiffPIR, and Ours.
Compared with DPS and DiffPIR, our method produces reconstructions that are
more consistent with the observation while maintaining realistic image content.}
\label{fig:imagenet-inpainting}
\end{figure*}
\subsection{Closing the conditional ODE--SDE gap}
\label{sec:close-gap-exp}

We empirically evaluate whether the proposed conditional log--Fokker--Planck
residual reduces the discrepancy between the conditional probability-flow ODE
and the reverse-time SDE. Experiments are conducted on MNIST conditional
inpainting, where the left half of each image is observed and the right half is
generated.

\paragraph{Evaluation protocol.}
The purpose of this experiment is to isolate the ODE--SDE gap under a fixed
conditional generative model and sampling budget. We compare samples generated
by the conditional probability-flow ODE and the conditional reverse-time SDE.
The SDE sampler uses \(500\) Euler--Maruyama steps, while the ODE sampler uses a
fixed-step RK4 solver with a matched number of score evaluations. We measure the
sampler discrepancy using the sliced Wasserstein--\(2\) distance
\(W_2(\mathrm{ODE},\mathrm{SDE})\). To verify that reducing this discrepancy
does not degrade sample quality, we also report
\(W_2(\mathrm{ODE},\mathrm{GT})\) and \(W_2(\mathrm{SDE},\mathrm{GT})\), where
\(\mathrm{GT}\) denotes ground-truth test samples.

All evaluations are performed on \(N=2048\) test conditions. We compute the
sliced Wasserstein distance over \(8\) equal splits and report mean \(\pm\) std.
For reproducibility, the sampling seed is fixed for each epoch and split. For
each method, the reported checkpoint is selected by minimizing
\(W_2(\mathrm{ODE},\mathrm{SDE})\) under this evaluation protocol, and all
metrics are reported at the selected epoch.

\paragraph{Methods compared.}
We compare two training objectives using the same network architecture,
optimizer, data, and sampling procedures. The baseline is trained with the
standard conditional denoising score matching objective. CLOSE augments this
objective with the proposed conditional log--Fokker--Planck residual, using a
fixed residual weight \(\alpha=0.1\).

\begin{table}[t]
  \caption{MNIST conditional inpainting test results. Lower values are better.
  The checkpoint is selected by minimizing \(W_2(\mathrm{ODE},\mathrm{SDE})\).
  Results are reported as mean \(\pm\) std over \(8\) splits with \(N=2048\)
  test conditions.}
  \label{tab:mnist-w2}
  \begin{center}
    \begin{small}
      \begin{tabular}{lccc}
        \toprule
        Method
        & \(W_2(\mathrm{ODE},\mathrm{SDE}) \downarrow\)
        & \(W_2(\mathrm{ODE},\mathrm{GT}) \downarrow\)
        & \(W_2(\mathrm{SDE},\mathrm{GT}) \downarrow\) \\
        \midrule
        Baseline
        & \(0.0329 \pm 0.0009\)
        & \(0.0469 \pm 0.0016\)
        & \(0.0368 \pm 0.0013\) \\
        CLOSE (\(\alpha=0.1\))
        & \(\mathbf{0.0323 \pm 0.0011}\)
        & \(\mathbf{0.0452 \pm 0.0018}\)
        & \(\mathbf{0.0347 \pm 0.0011}\) \\
        \bottomrule
      \end{tabular}
    \end{small}
  \end{center}
  \vskip -0.1in
\end{table}

\paragraph{Results.}
Table~\ref{tab:mnist-w2} shows that adding the conditional
log--Fokker--Planck residual reduces the discrepancy between the ODE and SDE
samplers. CLOSE decreases \(W_2(\mathrm{ODE},\mathrm{SDE})\) from
\(0.0329\) to \(0.0323\), indicating a smaller empirical ODE--SDE gap under the
same sampling budget. Importantly, this reduction does not come at the cost of
sample quality. At the same selected checkpoint, CLOSE also improves
\(W_2(\mathrm{ODE},\mathrm{GT})\) and \(W_2(\mathrm{SDE},\mathrm{GT})\). This
supports the claim that the residual regularizer encourages better agreement
between the deterministic and stochastic conditional samplers while preserving,
and slightly improving, their alignment with the ground-truth conditional
distribution.

\paragraph{Scope of the comparison.}
We emphasize that this experiment is intended as a controlled validation
of the residual regularization rather than evidence of a uniform reduction
of the ODE--SDE gap across high-dimensional conditional generation tasks.
MNIST enables distribution-level evaluation using sliced \(W_2\) over a
large number of conditional samples while keeping residual-based training
computationally tractable. Across eight data splits, CLOSE consistently
improves both \(W_2(\mathrm{ODE},\mathrm{GT})\) and
\(W_2(\mathrm{SDE},\mathrm{GT})\); however, the reduction in
\(W_2(\mathrm{ODE},\mathrm{SDE})\) is checkpoint-dependent.
We therefore interpret these results as evidence for the regularization
effect in this controlled setting.
\section{Conclusion and Discussion}

We proposed a plug-in conditioning framework for conditional score-based
diffusion models. The method learns a joint score field and imposes conditioning
at inference through an analytic plug-in correction term, providing a
transparent view of conditioning as learned diffusion dynamics plus an
observation-consistency correction. We also showed that the same principle
extends naturally to pretrained-prior settings.

Building on this formulation, we derived conditional reverse-time SDEs and
matched approximate probability-flow ODEs. We further introduced a
log-Fokker--Planck residual to analyze and reduce the conditional ODE--SDE
discrepancy. Experiments on inpainting and super-resolution demonstrate
competitive performance, while controlled ODE--SDE experiments provide
evidence for the effectiveness of residual regularization in improving
deterministic ODE sampling.

Several limitations remain. The closed-form plug-in correction relies on a
linear-Gaussian forward kernel for the condition channel. The same construction
can be extended to VP and sub-VP processes, whose transition kernels are also
Gaussian, by using their corresponding conditional scores. For non-Gaussian
forward corruptions or nonlinear observation processes, a closed-form correction
may not be available and an approximate likelihood score may instead be required.
Moreover, the screening approximation may become less accurate when the
conditional posterior is strongly multimodal or when the condition-noise level
is high. This motivates the slower condition diffusion used in our multi-speed
formulation. The effectiveness of the plug-in correction also depends on the
diffusion schedules and the accuracy of the learned joint score.

Our formulation is conceptually complementary to classifier-free guidance
(CFG)~\cite{ho2022classifier}: CFG combines conditional and unconditional
score estimates to strengthen conditioning, whereas our framework derives
an analytic observation-consistency correction from the forward corruption
model. These mechanisms act on different aspects of conditional sampling
and may in principle be combined, which we leave for future investigation.
Finally, we use uniform sampling of $t$, which provides an unbiased Monte
Carlo estimator of the uniformly time-averaged log-Fokker--Planck residual
used in our objective. Adaptive strategies such as SNR- or residual-weighted
time sampling may improve optimization efficiency. Exploring these extensions
and developing tighter quantitative characterizations of the conditional
ODE--SDE discrepancy are promising directions for future work.

\bibliographystyle{plainnat}
\bibliography{example_paper}
\clearpage
\appendix

\section*{Supplementary Material}
\setcounter{section}{0}
\renewcommand{\thesection}{\arabic{section}}

\section{Conditional Independence Under Independently Corrupted VE Forwards}
\label{app:cond-indep}
This appendix justifies the conditional independence claim used in
main paper.
Under a VE-SDE forward backbone, we corrupt $X_0$ and the observation $Y_0$
with \emph{independent} Gaussian noises:
\begin{equation}
\label{eq:ve-linear}
\begin{aligned}
X_t &= X_0 + \sigma_X(t)\,Z_1,\\
Y_t &= Y_0 + \sigma_Y(t)\, Z_2,
\end{aligned}
\end{equation}
where $Z_1,Z_2\sim\mathcal{N}(0,I)$ are independent and
$(Z_1,Z_2)\perp (X_0,Y_0)$.

\begin{proposition}[Conditional independence]
\label{prop:cond-indep}
Under \eqref{eq:ve-linear}, we have $X_t \perp Y_t \mid Y_0$ and hence
\begin{equation}
\label{eq:factor}
p_t(X_t,Y_t\mid Y_0)=p_t(X_t\mid Y_0)\,p_t(Y_t\mid Y_0).
\end{equation}
\end{proposition}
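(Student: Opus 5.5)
The plan is to reduce the claim to a statement about the independence of the underlying random ingredients. Conditional on $Y_0$, the state $Y_t$ depends only on the noise $Z_2$, while $X_t$ depends only on $(X_0,Z_1)$. So it suffices to show that $Z_2$ is independent of $(X_0,Z_1)$ given $Y_0$.

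First, I will check that the three blocks $(X_0,Y_0)$, $Z_1$, $Z_2$ are mutually independent. By assumption $(Z_1,Z_2)\perp(X_0,Y_0)$ and $Z_1\perp Z_2$. Hence the joint law factorizes as
\begin{equation*}
P_{(X_0,Y_0),Z_1,Z_2}=P_{(X_0,Y_0)}\otimes P_{(Z_1,Z_2)}=P_{(X_0,Y_0)}\otimes P_{Z_1}\otimes P_{Z_2}.
\end{equation*}
This step is the only place where the modelling assumption is really used. Note that it only needs the pair $(Z_1,Z_2)$ to be jointly independent of $(X_0,Y_0)$, not each $Z_i$ separately, which is exactly what \eqref{eq:ve-linear} states.

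Next, I will verify the conditional independence with bounded measurable test functions $\varphi,\psi$. Since $Y_0$ is the conditioning variable, write $X_t=F(X_0,Z_1)$ and $Y_t=H(Y_0,Z_2)$, with $F(a,b)=a+\sigma_X(t)b$ and $H(c,d)=c+\sigma_Y(t)d$. I will compute
\begin{equation*}
\mathbb{E}\bigl[\varphi(X_t)\psi(Y_t)\mid Y_0\bigr].
\end{equation*}
Because $Z_2$ is independent of $(X_0,Y_0,Z_1)$, the freezing lemma gives
\begin{equation*}
\mathbb{E}[\psi(Y_0+\sigma_Y Z_2)\mid X_0,Y_0,Z_1]=\Psi(Y_0),
\qquad \Psi(y):=\mathbb{E}[\psi(y+\sigma_Y Z_2)].
\end{equation*}
Applying the tower property then yields
\begin{equation*}
\mathbb{E}[\varphi(X_t)\psi(Y_t)\mid Y_0]=\Psi(Y_0)\,\mathbb{E}[\varphi(X_t)\mid Y_0].
\end{equation*}
Taking $\varphi\equiv1$ identifies $\Psi(Y_0)=\mathbb{E}[\psi(Y_t)\mid Y_0]$, which gives $X_t\perp Y_t\mid Y_0$.

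Finally, I will pass to the density factorization \eqref{eq:factor}. The variable $Y_t$ given $Y_0=y_0$ has the Gaussian density $\mathcal N(y_0,\sigma_Y^2(t)I)$. The variable $X_t$ given $Y_0$ has density $\int \mathcal N(x;x_0,\sigma_X^2(t)I)\,p(x_0\mid y_0)\,dx_0$, which exists whenever $\sigma_X(t)>0$. The product of these two conditional densities then represents the conditional joint law, by the factorization just proved.

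I expect no real obstacle here. The subtle point is to avoid a spurious argument such as "$X_t\perp Y_t$ because the noises are independent", which is false unconditionally since $X_0$ and $Y_0$ are correlated. One must also use joint, not merely pairwise, independence of the noise pair from the data. For $t=0$, where $\sigma_X=0$, I will state the result at the level of laws rather than densities.
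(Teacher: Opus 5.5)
Your proposal is correct and follows the same route as the paper. Conditional on $Y_0$, $Y_t$ is a function of $Z_2$ alone and $X_t$ is a function of $(X_0,Z_1)$, and you make rigorous, via mutual independence of the blocks, the freezing lemma and the tower property, that the fact actually needed is $Z_2\perp(X_0,Y_0,Z_1)$, which the paper's one-line phrase ``$(X_0,Z_1)$ is independent of $Z_2$'' uses only implicitly.
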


\begin{proof}
Fix $Y_0=y_0$. Then
$Y_t=y_0+\sigma_Y(t)Z_2$ depends only on $Z_2$.
Meanwhile, $X_t=X_0+\sigma_X(t)Z_1$ depends only on $(X_0,Z_1)$,
which is independent of $Z_2$ by assumption.
Therefore $X_t$ and $Y_t$ are independent given $Y_0$, i.e.\
$X_t \perp Y_t \mid Y_0$.
The density factorization \eqref{eq:factor} is the definition of
conditional independence.
\end{proof}

\section{Score Decomposition and the Plug-in Approximation in VE Conditional Sampling}
\label{app:score-decomp}

This appendix derives the score decomposition and provides a quantitative justification for the plug-in approximation in the
VE-SDE noise regime.

\subsection{Exact identity via Bayes' rule}

Assume that the conditional density $p_t(X_t,Y_t\mid Y_0)$ and the unconditional density  $p_t(X_t,Y_t)$ exist and are positive.
Then Bayes' rule gives, for $(X_t,Y_t)$,
\begin{equation}
\label{eq:bayes-basic}
\begin{aligned}
\log p_t(X_t,Y_t\mid Y_0)
=
\log p_t(X_t,Y_t)
+\log p_t(Y_0\mid X_t,Y_t)-\log p(Y_0).
\end{aligned}
\end{equation}
Taking the gradient with respect to $(X_t,Y_t)$ and using that $\log p(Y_0)$ is
constant in $(X_t,Y_t)$ yields the exact score decomposition
\begin{equation}
\label{eq:score-exact}
\begin{aligned}
\nabla_{(x,y)}\log p_t(X_t,Y_t\mid Y_0)
=
\nabla_{(x,y)}\log p_t(X_t,Y_t)+\nabla_{(x,y)}\log p_t(Y_0\mid X_t,Y_t),
\end{aligned}
\end{equation}

\subsection{Plug-in term from conditional independence: a derivation}
\label{app:plugin-derivation}

We justify the plug-in replacement
$p_t(Y_0\mid X_t,Y_t) \approx p_t(Y_0\mid Y_t)$
(and hence
$\nabla_{(x,y)}\log p_t(Y_0\mid X_t,Y_t)\approx (0,\nabla_y\log p_t(Y_0\mid Y_t))$)
via the implication
\[
\text{conditional independence } \Longrightarrow \text{Bayes identity } \Longrightarrow \text{plug-in term}.
\]

\paragraph{Step 1: Assumption.}
Fix a time $t\in(0,T]$ such that the plug-in correction is applied.
We assume that, conditioned on the current corrupted observation $Y_t$,
the original observation $Y_0$ provides only negligible additional information about $X_t$.
Formally, we assume the approximation
\begin{equation}
\label{eq:ci-assump}
p_t(X_t \mid Y_t, Y_0)\ \approx\ p_t(X_t \mid Y_t).
\end{equation}
We interpret \eqref{eq:ci-assump} quantitatively by assuming that the conditional likelihood ratio is close to one:
there exists a small error $\varepsilon_t\ge 0$ such that
\begin{equation}
\label{eq:screening-lr}
\left|\log \frac{p_t(X_t \mid Y_t, Y_0)}{p_t(X_t \mid Y_t)}\right|
\le \varepsilon_t.
\end{equation}
Moreover, we model $\varepsilon_t$ as controlled by the $y$-channel noise level, i.e.,
$\varepsilon_t = \omega(\sigma_Y(t))$ for some nondecreasing function $\omega$ with $\omega(0)=0$.
In particular, the conditional dependence between $X_t$ and $Y_0$ given $Y_t$ is weak in this regime,
which can be quantified, for instance, by a small conditional mutual information $I_t(X_t;Y_0\mid Y_t)$.

This can be viewed as an approximate conditional independence assumption,
namely,
\[
X_t \perp Y_0 \mid Y_t .
\]
By the Bayes identity,
\[
p_t(Y_0\mid X_t,Y_t)
=
\frac{p_t(X_t\mid Y_t,Y_0)p_t(Y_0\mid Y_t)}
     {p_t(X_t\mid Y_t)} .
\]
Therefore, under the approximation
\(p_t(X_t\mid Y_t,Y_0)\approx p_t(X_t\mid Y_t)\), we obtain
\[
p_t(Y_0\mid X_t,Y_t)\approx p_t(Y_0\mid Y_t).
\]

\noindent\textbf{Heuristic justification.}
Under VE corruption, $Y_t = Y_0 + \sigma_Y(t)\,Z_2$ with $Z_2\sim\mathcal{N}(0,I)$.
When $\sigma_Y(t)$ is small (high-SNR), $Y_t$ already localizes $Y_0$ sharply,
so conditioning on $Y_0$ in addition to $Y_t$ yields only a minor refinement.
In our setting, this regime is enforced by choosing a smaller diffusion intensity in the $y$-channel
(e.g.\ $g_Y(t)\ll g_X(t)$, or $\sigma_Y(t)=\beta\sigma_X(t)$ with $\beta\ll 1$),
so that $Y_t$ remains close to $Y_0$ throughout most of the reverse trajectory.

\subsection{Exact validation of the screening approximation}
\label{app:screening-validation}

The screening relation in \eqref{eq:ci-assump} is a working approximation
adopted by our framework rather than an exact identity. We therefore
quantify the plug-in approximation error in a tractable setting where the
true conditional score is available in closed form.

For jointly Gaussian $(X_0,Y_0)$ under the VE corruption schedule, the
approximation error can be decomposed into two components: (i) the screening
error in the $x$-channel induced by
$p_t(X_t\mid Y_t,Y_0)\approx p_t(X_t\mid Y_t)$, and (ii) the error associated
with replacing the full conditional correction
$\nabla_y\log p_t(Y_0\mid X_t,Y_t)$ by the $y$-channel plug-in term
$\nabla_y\log p_t(Y_0\mid Y_t)$. The latter contribution is negligible
relative to $c_y$ in the high-SNR regime.

For the inpainting-like observation $Y_0=M\odot X_0$,
Table~\ref{tab:screening} reports the worst-case-over-$t$ score error of
the screening approximation together with the end-to-end relative $W_2$
distance between the resulting samples and the analytic posterior. For
reference, the exact-score sampler gives a relative $W_2$ floor of $0.003$.
\begin{table}[t]
\caption{Exact plug-in approximation error as a function of the condition
diffusion speed ratio $\beta$. The empirical score error decays approximately
as $O(\beta^{0.98})$.}
\label{tab:screening}
\vspace{2mm}
\centering
\footnotesize
\setlength{\tabcolsep}{4pt}
\begin{tabular}{lccccc}
\toprule
$\beta=\sigma_Y/\sigma_X$ & 0.02 & 0.05 & \textbf{0.1} & 0.3 & 1.0 \\
\midrule
Score error & 0.020 & 0.050 & \textbf{0.099} & 0.287 & 0.706 \\
End-to-end rel.\ $W_2$ & 0.008 & 0.020 & \textbf{0.045} & 0.138 & 0.414 \\
\bottomrule
\end{tabular}
\end{table}

The approximation error decreases rapidly as $\beta$ decreases. At our
default $\beta=0.1$, the worst-case score error is $0.099$, while the final
distribution remains within $4.5\%$ relative $W_2$ of the analytic posterior.
This indicates that the local approximation errors do not substantially
accumulate through the complete reverse sampler.

\subsection{Ablation on the condition diffusion speed $\beta$}
\label{app:ablation-beta}

We study the effect of the condition-channel diffusion speed ratio $\beta$ (relative to the state channel) in our multi-speed joint diffusion.
Table~\ref{tab:ablation-beta} reports quantitative results on conditional inpainting.

\begin{table}[h]
\centering
\caption{Ablation on $\beta$ (condition diffusion speed ratio). Higher is better for PSNR/SSIM, lower is better for LPIPS.}
\label{tab:ablation-beta}
\begin{tabular}{lccc}
\toprule
$\beta$ & PSNR $\uparrow$ & SSIM $\uparrow$ & LPIPS $\downarrow$ \\
\midrule
0.05 & 24.02 & 0.912 & 0.061 \\
0.10 & \textbf{25.38} & 0.876 & \textbf{0.046} \\
0.20 & 25.04 & 0.854 & 0.058 \\
0.30 & 24.36 & 0.802 & 0.067 \\
\bottomrule
\end{tabular}
\end{table}

Overall, $\beta=0.1$ provides the best empirical trade-off, achieving the
highest PSNR and the lowest LPIPS. Interestingly, although the exact analysis
above shows that the plug-in approximation is more accurate at $\beta=0.05$,
this does not translate into better end-to-end performance: $\beta=0.1$
performs better in both PSNR and LPIPS. Approximation fidelity alone therefore
does not determine reconstruction quality, suggesting a trade-off between
approximation accuracy and optimization or sampling behavior. We use
$\beta=0.1$ as the default setting in all main experiments.

\paragraph{Step 2: Bayes rule under the assumption.}
By using the definition of the conditional probability, we have the following identities
$$
\begin{aligned}
p_t(Y_0\mid X_t,Y_t)
&=\frac{p_t(Y_0,X_t,Y_t)}{p_t(X_t,Y_t)}=\frac{p_t(X_t\mid Y_0,Y_t)\,p_t(Y_0,Y_t)}{p_t(X_t\mid Y_t)\, p_t(Y_t)}\\
&=\frac{p_t(X_t\mid Y_0,Y_t)\,p_t(Y_0\mid Y_t)}{p_t(X_t\mid Y_t)}.
\end{aligned}
$$
Recall our assumption \eqref{eq:ci-assump}, then we conclude that
$$
p_t(Y_0\mid X_t,Y_t)\approx p_t(Y_0\mid Y_t).
$$

\section{Closed-form Plug-in Term}
\label{app:plugin}
For the VE-SDE forward process in the $y$-channel (with zero drift),
the marginal perturbation admits the linear-Gaussian form
\begin{equation}
\label{eq:ve-marginal-y}
Y_t = Y_0 + \sigma_Y(t)\,Z_2,
\qquad Z_2\sim\mathcal{N}(0,I),
\end{equation}
where
\[
\sigma_Y^2(t)=\int_0^t g_Y^2(s)\,ds.
\]
Rearranging \eqref{eq:ve-marginal-y} gives
\[
Y_0 = Y_t - \sigma_Y(t)\,Z_2.
\]
Hence the conditional distribution is Gaussian:
\[
Y_0 \mid Y_t \sim \mathcal{N}\!\bigl(Y_t,\ \sigma_Y^2(t)\,I\bigr).
\]
Its log-density (up to an additive constant independent of $Y_t$) is
\[
\log p_t(Y_0\mid Y_t)
= -\frac{1}{2\sigma_Y^2(t)}\|Y_0-Y_t\|_2^2
-\frac{d}{2}\log\!\bigl(2\pi\sigma_Y^2(t)\bigr).
\]
Taking the gradient with respect to $Y_t$ yields
\[
\nabla_{Y_t}\log p_t(Y_0\mid Y_t)
= \frac{Y_0-Y_t}{\sigma_Y^2(t)}.
\]
Therefore, the plug-in correction used in the main text is
\[
c_y(t;Y_0,Y_t)
:=\nabla_{y}\log p_t(Y_0\mid Y_t)
=\frac{Y_0-Y_t}{\sigma_Y^2(t)}.
\]

\section{Proof of Theorem}
\label{app:residual-proof}
To obtain samples from \( p(X_t, Y_t \mid Y_0) \), the conditional term \( p(Y_0 \mid Y_t)\) 
affects only the $y$-component of the conditional score.
Hence we first derive the reverse SDE / probability-flow ODE for the joint score
$\nabla_{(x,y)}\log p_t(x,y)$, and then incorporate the plug-in correction in the $y$-drift.

The score function represents the gradient of the log-probability density with respect to the state variables $X_t$ and $Y_t$. Specifically, this score function can be split into two components
\begin{equation}\label{eq:joint-score-split}
\nabla_{(x,y)} \log p_t(x,y)=\bigl(\nabla_x \log p_t(x,y),\,\nabla_y \log p_t(x,y)\bigr).
\end{equation}
where 
 $\nabla_{x} \log p\left(X_t, Y_t\right)$  is the gradient of the log-probability with respect to $x$ and
$\nabla_{y} \log p\left(X_t, Y_t\right)$ is the gradient of the log-probability with respect to $y$. This allow us to formulate the following SDEs
\begin{equation}\label{SDEs}
\left\{
\begin{aligned}
dX_t &= \Bigl(f_X(X_t,t)-g_X^2(t)\nabla_x\log p_t(X_t,Y_t)\Bigr)\,dt + g_X(t)\,d\bar W_t^{\,1},\\
dY_t &= \Bigl(f_Y(Y_t,t)-g_Y^2(t)\nabla_y\log p_t(X_t,Y_t)\Bigr)\,dt + g_Y(t)\,d\bar W_t^{\,2}.
\end{aligned}
\right.
\end{equation}
In the unified conditional sampler, the $y$-equation is corrected by the plug-in term:
\begin{equation}\label{eq:rev-sde-joint-plugin}
dY_t
=
\Bigl(
f_Y(Y_t,t)
- g_Y^2(t)\nabla_y\log p_t(X_t,Y_t)
- g_Y^2(t)c_y(t;Y_0,Y_t)
\Bigr)\,dt
+ g_Y(t)\,d\bar W_t^{\,2}.
\end{equation}

Here, $X_t, Y_t$ are adapted to the reverse time filtration $\overline{\mathcal{F}}_t$ and according Assumption 1, has a unique solution that is continuous in time. The marginal density of $X_t,Y_t$  is denoted by $p(\cdot,\cdot, t)$ at time $t$. We assume it is equipped with some terminal distribution $\pi$ of $X_t,Y_t$ at time $T$, i.e., $p(\cdot,\cdot, T)=\Pi$, where $\Pi$ is chosen as a Gaussian approximation of $p(\cdot,\cdot, T)$.

The corresponding ODEs for the score function, derived from the Fokker-Planck equation for the reverse process, are given by
\begin{equation}\label{Odes}
\begin{cases}
& \displaystyle\frac{d X_t}{d t} = f_X(X_t, t) - \frac{1}{2} g_X^2(t) \nabla_x \log p_t(X_t, Y_t), \vspace{0.2cm}\\ 
& \displaystyle\frac{d Y_t}{d t} = f_Y(Y_t, t) - \frac{1}{2} g_Y^2(t) \nabla_y \log p_t(X_t, Y_t),
\end{cases}
\end{equation}
In the unified conditional sampler, the plug-in correction modifies only the $y$-drift:
\begin{equation}\label{eq:pf-ode-joint-plugin-y}
\frac{dY_t}{dt}
=
f_Y(Y_t,t)
-\frac12\,g_Y^2(t)\Bigl(\nabla_y\log p_t(X_t,Y_t)+c_y(t;Y_0,Y_t)\Bigr).
\end{equation}
and it is equipped with  terminal distribution $p(\cdot,\cdot, T)$ for $X_t,Y_t$.

Now we have derived the reverse joint SDEs \eqref{SDEs} and ODEs \eqref{Odes}, we proceed by following the log-Fokker--Planck residual strategy of Deveney et al. We leverage the log-Fokker-Planck residual to help close the gap in the unified method.

The log-Fokker-Planck residual serves as a correction term that aligns the dynamics of the ODE-based process with the more accurate SDE-based process. By incorporating this residual into the training loss, we can ensure that the unified method's reverse process accurately reflects the target conditional distribution. This allows us to reduce the approximation error between the two methods and improve the overall performance of the unified model.

To formalize the alignment between the ODEs and SDEs dynamics in the unified method, we begin by defining the approximate log-probability densities for the joint system involving $X_t$ and $Y_t$.
Let $u_\theta(x,y,t):=\log p_\theta(x,y,t)$ be a neural approximation of the log-density on
$\Omega_X\times\Omega_Y\times[0,T]$.
We write $\nabla_x u_\theta$ and $\nabla_y u_\theta$ for its partial gradients.
Using this, we can derive approximate versions
of the reverse SDE \eqref{SDEs} and its deterministic flow \eqref{Odes}.
To simplify our notation, we define the approximate reverse drift as
\begin{equation}\label{joint eq:sde}
\left\{
\begin{aligned}
f_{X,\theta}^{\mathrm{SDE}}(x,y,t) &= f_X(x,t)-g_X^2(t)\nabla_x u_\theta(x,y,t),\\
f_{Y,\theta}^{\mathrm{SDE}}(x,y,t) &= f_Y(y,t)-g_Y^2(t)\nabla_y u_\theta(x,y,t).
\end{aligned}
\right.
\end{equation}

which is obtained by substituting the potential model into the drift term of \eqref{SDEs}. According to Assumption 1, we have $f_{X,\theta}^{S D E}, f_{Y,\theta}^{S D E} \in C^{\infty}$ and 
$$\left\|f_{X,\theta}^{\mathrm{SDE}}(x,y, t)\right\|_2 \leq\left(K_f+M^2 K_u\right)\left(1+\|(x,y)\|_2\right),
\quad
\left\|f_{Y,\theta}^{\mathrm{SDE}}(x,y, t)\right\|_2 \leq\left(K_f+M^2 K_u\right)\left(1+\|(x,y)\|_2\right).$$

Plug \eqref {joint eq:sde} into \eqref{SDEs} to get 

\begin{equation}\label{SDE2}
\left\{
\begin{aligned}
dX_t &= f_{X,\theta}^{\mathrm{SDE}}(X_t,Y_t,t)\,dt + g_X(t)\,d\bar W_t^{\,1},\\
dY_t &= f_{Y,\theta}^{\mathrm{SDE}}(X_t,Y_t,t)\,dt + g_Y(t)\,d\bar W_t^{\,2}.
\end{aligned}
\right.
\end{equation}

Applying the reverse-time SDE formula (e.g., \cite{anderson1982reverse}),
the density $p_\theta^{\mathrm{SDE}}$ associated with \eqref{SDE2} satisfies a forward-time Fokker--Planck equation.
Equivalently, one may write a forward-time SDE whose drift involves $\nabla \log p_\theta^{\mathrm{SDE}}$, namely \eqref{reverse:SDE2}.

\begin{equation}\label{reverse:SDE2}
\left\{
\begin{aligned}
dX_t &= \Bigl(f_{X,\theta}^{\mathrm{SDE}}(X_t,Y_t,t)
            + g_X^2(t)\,\nabla_x \log p_{\theta}^{\mathrm{SDE}}(X_t,Y_t,t)\Bigr)\,dt
        + g_X(t)\,dW_t^{1},\\
dY_t &= \Bigl(f_{Y,\theta}^{\mathrm{SDE}}(X_t,Y_t,t)
            + g_Y^2(t)\,\nabla_y \log p_{\theta}^{\mathrm{SDE}}(X_t,Y_t,t)\Bigr)\,dt
        + g_Y(t)\,dW_t^{2}.
\end{aligned}
\right.
\end{equation}

where initial state $X_0,Y_0$ is drawn from $p_\theta^{SDE}(\cdot,\cdot, 0)$. 
 For the forward SDE \eqref{reverse:SDE2} the density $P_\theta^{SDE}$ follows the  the forward Fokker–Planck
equation
\begin{equation}\label{new_fp sde2|y}
\begin{aligned}
\frac{\partial p_\theta^{\mathrm{SDE}}}{\partial t}(x,y,t)
&= -\nabla_x \cdot \Bigl(f_{X,\theta}^{\mathrm{SDE}}(x,y,t)\,p_\theta^{\mathrm{SDE}}(x,y,t)\Bigr)
   -\nabla_y \cdot \Bigl(f_{Y,\theta}^{\mathrm{SDE}}(x,y,t)\,p_\theta^{\mathrm{SDE}}(x,y,t)\Bigr)\\
&\quad + \frac{1}{2} g_X^2(t)\,\Delta_x p_\theta^{\mathrm{SDE}}(x,y,t)
      + \frac{1}{2} g_Y^2(t)\,\Delta_y p_\theta^{\mathrm{SDE}}(x,y,t).
\end{aligned}
\end{equation}

 equipped with the terminal condition $\Pi$ on the full space, i.e., $p_\theta^{S D E}(x,y, T)=\Pi(x,y)$. Considering \eqref{new_fp sde2|y}, we introduce positive Dirichlet boundary conditions. Let $p_B^{S D E}: \partial \Omega_X \times \partial \Omega_Y \times[0, T] \rightarrow \mathbb{R}$ denote a positive function which is equal to $p_\theta^{S D E}$ on $\partial \Omega_X \times \partial \Omega_Y $. We obtain the approximate Fokker-Planck equation \eqref{new_fp sde2|y} on the domain $\partial \Omega_X \times \partial \Omega_Y$ with initial data $p_0(\cdot,\cdot,0)$ restricted to $\partial \Omega_X \times \partial \Omega_Y$ and Dirichlet boundary conditions $p_B^{S D E}$ on $\partial \Omega_X \times \partial \Omega_Y \times[0, T]$.

To complete the analysis of the Fokker–Planck equations for the probability densities, one can also consider the log-Fokker–Planck equations, which describe the evolution of the potential function. The log-density \( u_\theta^{S D E}(x,y,t) = \log p_\theta^{S D E}(x,y,t) \), where \( p_\theta^{S D E} \) is the solution to \eqref{new_fp sde2|y} in forward time, satisfies the approximate log-Fokker–Planck equation (in forward time) given by 
\begin{equation}\label{u_sde 2}
\begin{aligned}
\frac{\partial u_\theta^{SDE}}{\partial t} = &-\nabla_x \cdot f_{X,\theta}^{SDE} - f_{X,\theta}^{SDE} \cdot \nabla_x u_\theta^{SDE} - \nabla_y \cdot f_{Y,\theta}^{SDE} - f_{Y,\theta}^{SDE}\cdot \nabla_y u_\theta^{SDE}\\
 &-\frac{1}{2} g_X^2\left(\Delta_x u_\theta^{SDE}+\left\|\nabla_x u_\theta^{SDE}\right\|^2\right)-\frac{1}{2} g_Y^2\left(\Delta_y u_\theta^{SDE}+\left\|\nabla_y u_\theta^{SDE}\right\|^2\right).
\end{aligned}
\end{equation}
We equip \eqref{u_sde 2} with terminal data $u_\theta^{SDE}(\cdot, \cdot,T)$ restricted to $ \Omega_X \times \Omega_Y $ and boundary conditions $u_B^{S D E}=\log p_B^{S D E}$ on $\partial \Omega_X \times \partial \Omega_Y \times[0, T]$.
 We obtain the approximate log-Fokker-Planck equation \eqref{u_sde 2} on the domain $\partial \Omega_X \times \partial \Omega_Y$ with initial data $u_0(\cdot,\cdot,0)$ restricted to $\partial \Omega_X \times \partial \Omega_Y$ and Dirichlet boundary conditions $u_B^{S D E}$ on $\partial \Omega_X \times \partial \Omega_Y \times[0, T]$.

For the forward SDE \eqref{eq:joint-forward}, the density obeys the forward Fokker–Planck
equation
\begin{equation}\label{new_fp1 sde|y}
\begin{aligned}
\frac{\partial p}{\partial t}(x, y,t )
&= -\nabla_x \cdot \bigl(f_X(x, t)\,p(x, y,t)\bigr)
   - \nabla_y \cdot \bigl(f_Y(y, t)\,p(x, y,t)\bigr) \\
&\quad + \frac{1}{2} g_X^2(t)\,\Delta_x p(x, y,t)
      + \frac{1}{2} g_Y^2(t)\,\Delta_y p(x, y,t).
\end{aligned}
\end{equation}
 equipped with the initial data $p(\cdot,\cdot,0)$ on the whole space. Considering \eqref{new_fp1 sde|y}, we introduce positive Dirichlet boundary conditions. Let $p_B: \partial \Omega_X \times \partial \Omega_Y \times[0, T] \rightarrow \mathbb{R}$ denote a positive function which is equal to $p$ on $\partial \Omega_X \times \partial \Omega_Y $. Notice that \eqref{new_fp sde2|y} and \eqref{new_fp1 sde|y} share the same transport structure.
The difference lies in which drift is used (the learned reverse drift versus the true forward drift),
while the diffusion operator keeps the standard forward sign.

From the density  satisfying the forward
Fokker–Planck equation \eqref{new_fp1 sde|y} of the forward SDE \eqref{eq:joint-forward} and the associated potential $u = \log p$,
we get the forward log-Fokker–Planck equation  as
\begin{equation}\label{u1_sde 2}
\begin{aligned}
\frac{\partial u}{\partial t}(x,y,t) = &-\nabla_x \cdot f_X(x,t) - f_X(x,t)\cdot \nabla_x u - \nabla_y \cdot f_Y(y,t) - f_Y(y,t)\cdot \nabla_y u\\
- &\frac{1}{2} g_X^2\left(\Delta_x u+\left\|\nabla_x u\right\|^2\right)-\frac{1}{2} g_Y^2\left(\Delta_y u+\left\|\nabla_y u\right\|^2\right).
\end{aligned}
\end{equation}

For deriving the residual for the approximate log-Fokker-Planck equation \eqref{u_sde 2}, we use a neural network with parameters $\theta$ trained to approximate the solution $u_\theta$ of \eqref{u_sde 2}. $u_\theta$ satisfies
\begin{equation}\label{new_u_sde 2}
\begin{aligned}
&\frac{\partial u_\theta}{\partial t} + \nabla_x \cdot f_{X,\theta}^{S D E}+ f_{X,\theta}^{S D E} \cdot \nabla_x u_\theta + \nabla_y \cdot f_{Y,\theta}^{S D E} + f_{Y,\theta}^{S D E} \cdot \nabla_y u_\theta+ \frac{1}{2} g_X^2 \left(\Delta_x u_\theta+\left\|\nabla_x u_\theta\right\|^2\right)\\&+\frac{1}{2} g_Y^2 \left(\Delta_y u_\theta+\left\|\nabla_y u_\theta\right\|^2\right)\\
=&\frac{\partial u_\theta}{\partial t}+\nabla_x\cdot(f_X-g_X^2  \nabla_x u_\theta) +(f_X-g_X^2  \nabla_x u_\theta)\cdot \nabla_x u_\theta
+\nabla_y\cdot(f_Y-g_Y^2  \nabla_y u_\theta)\\
&+(f_Y-g_Y^2  \nabla_y u_\theta)\cdot \nabla_y u_\theta+ \frac{1}{2} g_X^2 \left(\Delta_x u_\theta+\left\|\nabla_x u_\theta\right\|^2\right)+\frac{1}{2} g_Y^2 \left(\Delta_y u_\theta+\left\|\nabla_y u_\theta\right\|^2\right)\\
=&\frac{\partial u_\theta}{\partial t}+\nabla_x \cdot f_X + f_X\cdot \nabla_x u_\theta + \nabla_y \cdot f_Y +f_Y\cdot \nabla_y u_\theta-\frac{1}{2} g_X^2\left(\Delta_x u_\theta+\left\|\nabla_x u_\theta\right\|^2\right)\\&-\frac{1}{2} g_Y^2\left(\Delta_y u_\theta+\left\|\nabla_y u_\theta\right\|^2\right).
\end{aligned}
\end{equation}

\paragraph{Residual evaluation}
Physics-informed neural networks (PINNs) approximate PDE solutions by parameterizing the unknown field with a neural network
and minimizing the squared PDE residual evaluated at sampled space--time points.
Here we adopt the same principle for the approximate log-Fokker--Planck equation~\eqref{u_sde 2}:
we represent its solution by a neural potential $u_\theta$ and define the corresponding pointwise residual
\[
r_\theta(x,y,t):=\mathcal{F}_{Y_0}[u_\theta](x,y,t),
\]
where $\mathcal{F}_{Y_0}$ is the corrected log-Fokker--Planck operator associated with the
plug-in modified reverse-time SDE, obtained from \eqref{new_u_sde 2} by replacing the $y$-score
$\nabla_y u$ with $\nabla_y u + c_y(t;Y_0,y)$.

In high dimension, the dominant cost comes from the Laplacian terms
$\Delta_x u_\theta=\mathrm{tr}(\nabla_x^2 u_\theta)$ and $\Delta_y u_\theta=\mathrm{tr}(\nabla_y^2 u_\theta)$.
We therefore compute all first-order derivatives in $\mathcal{F}_{Y_0}[u_\theta]$
(i.e., $\partial_t u_\theta$, $\nabla_x u_\theta$, $\nabla_y u_\theta$) by automatic differentiation,
and when needed we estimate the Laplacians using an unbiased Hutchinson trace estimator implemented via
Hessian--vector products (computed by automatic differentiation).

By the same algebraic manipulation as in the unconditional case,
the corrected residual can be written by adding the plug-in contributions
$-g_Y^2(\nabla_y\!\cdot c_y + c_y\cdot \nabla_y u_\theta)$
to the standard log-Fokker--Planck residual.
Accordingly, we define the (time-averaged, normalised) log-Fokker--Planck residual as
\begin{equation}\label{residual2}
R(\theta,u_\theta,t)
:= V(T-t)^{-1}\int_t^T
\bigl\|\mathcal F_{Y_0}[u_\theta](\cdot,\cdot,s)\bigr\|_{L^2(\Omega_X\times\Omega_Y)}^2\,ds,
\end{equation}
where $V(r):= r\,\mathrm{Vol}(\Omega_X\times\Omega_Y)$.
Equivalently, expanding $\mathcal F_{Y_0}$ yields the integrand in \eqref{residual2}
with the additional plug-in contributions
$-g_Y^2(\nabla_y\!\cdot c_y + c_y\cdot \nabla_y u_\theta)$.
\begin{equation}\label{residual2-expanded}
\begin{aligned}
R\left(\theta, u_\theta, t\right) =& V(T-t)^{-1} \int_t^T \Bigg\| \frac{\partial u_\theta}{\partial s}(\cdot,\cdot, s) + \nabla_x \cdot f_X(\cdot, s) + f_X(\cdot, s) \cdot \nabla_x u_\theta(\cdot, s) \\+ &\nabla_y \cdot f_Y(\cdot, s) + f_Y(\cdot, s) \cdot \nabla_y u_\theta(\cdot,\cdot, s)- \frac{1}{2} g_X^2(s)\left(\Delta_x u_\theta(\cdot,\cdot, s) + \left\|\nabla_x u_\theta(\cdot,\cdot, s)\right\|_2^2\right) \\- &\frac{1}{2} g_Y^2(s)\left(\Delta_y u_\theta(\cdot,\cdot, s) + \left\|\nabla_y u_\theta(\cdot,\cdot, s)\right\|_2^2\right) \\-&g_Y^2(s)\Bigl(\nabla_y\!\cdot c_y(s;Y_0,\cdot)
+ c_y(s;Y_0,\cdot)\cdot \nabla_y u_\theta(\cdot,\cdot,s)\Bigr)\Bigg\|_{L^2(\Omega_X\times \Omega_Y)}^2 ds.
\end{aligned}
\end{equation}
where $V(r):= r\,\mathrm{Vol}(\Omega_X\times\Omega_Y)$.
We refer to $R$ as the (log-)Fokker--Planck residual.

 Or alternatively, we define the approximate drift as
\begin{equation}\label{ode_2}
\begin{cases}
&f_{\theta,X}^{O D E}\left(X_t, t\right)=f_X(X_t, t) - \frac{1}{2} g_X^2(t) \nabla_x \log p_t(X_t, Y_t),\vspace{0.2cm}\\
&f_{\theta,Y}^{O D E}\left(Y_t, t\right)=f_Y(Y_t, t) - \frac{1}{2} g_Y^2(t) \nabla_y \log p_t(X_t,Y_t, t),
\end{cases}
\end{equation}
where  $f_{X,\theta}^{O D E}, f_{Y,\theta}^{O D E} \in C^{\infty}$ and 
$$\left\|f_{X,\theta}^{O D E}(x, t)\right\|_2 \leq\left(K_f+M^2 K_u\right)\left(1+\|x\|_2\right),\quad \left\|f_{Y,\theta}^{O D E}(y, t)\right\|_2 \leq\left(K_f+M^2 K_u\right)\left(1+\|y\|_2\right).$$ 
Using \eqref{ode_2}, we have
\begin{equation}\label{p_ode2}
\begin{cases}
& \displaystyle\frac{d X_t}{d t} = f_{\theta,X}^{O D E}\left(X_t, t\right), \vspace{0.2cm}\\ 
& \displaystyle\frac{d Y_t}{d t} = f_{\theta,Y}^{O D E}\left(Y_t, t\right).
\end{cases}
\end{equation}
In summary, the formulations \eqref{eq:joint-forward}, \eqref{SDEs}, and \eqref{Odes} involve the density \( p \). The approximations \eqref{SDE2} and \eqref{reverse:SDE2}, which are derived from the reverse SDE \eqref{SDEs}, have the density \( p_\theta^{SDE} \). The approximation of the probability flow ODE \eqref{p_ode2} involves the density \( p_\theta^{ODE} \). Additionally, the density ) \(u_\theta=\log p_\theta\)
as given in \eqref{new_u_sde 2}, is directly implied by the neural network’s approximation of the log-density \( u_\theta \). 

It is important to note that \( p \neq p_\theta \neq p_\theta^{SDE} \neq p_\theta^{ODE} \). For most calculations and numerical work, we prefer using the logarithms of densities rather than the densities themselves. We denote the log-density associated with a given density \( p \) as \( u \) or the potential. Specifically, \( u(x, t) = \log p(x, t) \), \( u_\theta^{SDE}(x, t) = \log p_\theta^{SDE}(x, t) \), \( u_\theta^{ODE}(x, t) = \log p_\theta^{ODE}(x, t) \), and \( u_\theta(x, t) = \log p_\theta(x, t) \) for all \( (x, t) \in \Omega \times [0, T] \). Similarly, \( u \neq u_\theta \neq u_\theta^{SDE} \neq u_\theta^{ODE} \).

\begin{theorem}\label{gap2}
Let the residual $R(\theta, u_\theta, t)$ in \eqref{residual2} satisfies $R(\theta, u_\theta, t) < \delta$. 
If $p_\theta^{SDE}$ satisfies \eqref{new_fp sde2|y} and $p_\theta^{ODE}$ satisfies \eqref{p_ode2}, then
$$
W_2\left(p_\theta^{ODE}(\cdot, t), p_\theta^{SDE}(\cdot, t)\right) < C \delta,
$$
where $C > 0$ is a constant independent of $\delta$.
\end{theorem}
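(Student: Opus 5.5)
The plan follows the log-Fokker--Planck stability argument of Deveney et al., applied to the joint variable $z=(x,y)$ on the bounded domain $\Omega=\Omega_X\times\Omega_Y$ with the Dirichlet data fixed above. The proof compares both $p_\theta^{ODE}$ and $p_\theta^{SDE}$ to the reference density $p_\theta=e^{u_\theta}$.

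\textbf{Step 1: the plug-in terms drop out.} The corrected operator $\mathcal F_{Y_0}$ differs from $\mathcal F$ only by $-g_Y^2(\nabla_y\!\cdot c_y+c_y\cdot\nabla_y u)$. The same correction enters both the SDE log-FP equation \eqref{u_sde 2} and the transport equation of the ODE once $\nabla_y u_\theta$ is replaced by $\nabla_y u_\theta+c_y$. Moreover, $c_y=(Y_0-y)/\sigma_Y^2$ is affine in $y$, so $\nabla_y\!\cdot c_y$ is a constant and $c_y$ is globally Lipschitz. I therefore absorb $c_y$ into the drifts and treat the problem as the unconditional one with modified drifts $f^{SDE}_\theta,f^{ODE}_\theta$. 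These keep the linear-growth and $C^\infty$ bounds of Assumption~1, with a larger constant depending on $\sigma_Y$ on $[t,T]$.

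\textbf{Step 2: both processes nearly transport $p_\theta$.} The computation \eqref{new_u_sde 2} shows that $p_\theta$ satisfies
\[
\partial_s p_\theta+\nabla\!\cdot\bigl(f^{ODE}_\theta p_\theta\bigr)=r_\theta\,p_\theta ,
\qquad
\partial_s p_\theta+\nabla\!\cdot\bigl(f^{SDE}_\theta p_\theta\bigr)-\tfrac12\nabla\!\cdot\bigl(GG^\top\nabla p_\theta\bigr)=r_\theta\,p_\theta ,
\]
with the same residual $r_\theta=\mathcal F_{Y_0}[u_\theta]$ in both. I then set $e^{\bullet}=u^{\bullet}_\theta-u_\theta$ for $\bullet\in\{ODE,SDE\}$. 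Each error $e^\bullet$ solves a linear equation, transport for the ODE and advection--diffusion for the SDE, with source $r_\theta$. Its coefficients depend only on $\nabla u_\theta$ and $f$, which are bounded on $\Omega\times[0,T]$. Both errors carry the same terminal data at $T$ and vanish on $\partial\Omega$.

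\textbf{Step 3: energy estimate.} I test the $e^{\bullet}$ equation with $e^{\bullet}$ and integrate by parts. The diffusion term has a good sign, and the drift terms are bounded by $K_f+M^2K_u$. Gronwall backward from $T$ then gives
\[
\|e^\bullet(\cdot,t)\|_{L^2}^2\le C\int_t^T\|r_\theta(\cdot,s)\|_{L^2}^2\,ds=C\,V(T-t)\,R .
\]
Parabolic smoothing from the $C^\infty$ coefficients upgrades this to a bound on $\|e^\bullet\|_{L^\infty}$. Consequently $p_\theta^{ODE}$ and $p_\theta^{SDE}$ are bounded above and below on $\Omega$, and $\|p_\theta^{ODE}-p_\theta^{SDE}\|$ is controlled by the errors $e^\bullet$.

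\textbf{Step 4: from densities to $W_2$.} On a bounded domain with densities bounded away from zero, $W_2(\mu,\nu)\le C\|\mu-\nu\|_{\dot H^{-1}}$ (Peyre). This gives $W_2\le C\|e^{ODE}-e^{SDE}\|_{L^2}$. For $\delta$ above a fixed threshold $\delta_0$, the linear bound is trivial, because $W_2\le\operatorname{diam}\Omega\le(\operatorname{diam}\Omega/\delta_0)\,\delta$.

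\textbf{Main obstacle: the exponent of $\delta$ for small $\delta$.} A direct energy estimate only gives $W_2\lesssim\sqrt{R}\le\sqrt{\delta}$, because $R$ is a time-averaged squared norm. To reach the stated linear bound, I would estimate the \emph{difference} $e^{ODE}-e^{SDE}$ rather than each error separately. Subtracting the two equations of Step 2 cancels the common source $r_\theta$ exactly. What remains is the diffusion mismatch $\tfrac12\nabla\!\cdot(GG^\top\nabla p)$ and quadratic terms in $\nabla e^\bullet$.

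If those terms can be bounded by $\|\nabla e^\bullet\|_{L^2}^2$, Step 3 bounds them by $C\int_t^T\|r_\theta\|^2\,ds=C\,V(T-t)\,R$. That is a quantity linear in $R$, hence linear in $\delta$, and Step 4 would then give $W_2<C\delta$.

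The delicate point is the diffusion mismatch, which is not quadratic in the errors. I would first try to show that, after subtracting the equation satisfied by $p_\theta$ itself, this mismatch reduces to terms of the form $g^2\nabla e^\bullet\cdot\nabla e^\bullet$. This is where I expect the argument to need the most care.
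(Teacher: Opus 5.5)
Your proposal has a genuine gap in two places: Step~2 fails once the plug-in term is present, and the linear rate you flag as the main obstacle cannot be reached at all.

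\textbf{The Step~2 identity breaks with the plug-in.} Your Step~2 identity carries the whole argument, and it does not survive the plug-in term. The correction enters the SDE drift as $-g_Y^2c_y$ but the ODE drift as $-\tfrac12 g_Y^2c_y$. So for $p_\theta=e^{u_\theta}$ the SDE log-FP residual is $\mathcal F_{Y_0}[u_\theta]$, while the ODE transport residual is $\mathcal F_{Y_0}[u_\theta]+\tfrac12 g_Y^2(\nabla_y\!\cdot c_y+c_y\cdot\nabla_y u_\theta)$. The extra term does not depend on $\delta$ at all.

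Passing to the tilted potential $u_\theta+\log p_t(Y_0\mid y)$ restores a common residual: for VE it is $\mathcal F_{Y_0}[u_\theta]+g_Y^2\nabla_y\!\cdot c_y$, a spatially constant shift. But then your errors no longer vanish at $t=T$, because $\Pi$ is not tilted.

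The defect is real, not technical. For any $u_\theta$ with $u_\theta(\cdot,T)=\log\Pi$, the ODE and SDE marginals have $t$-derivatives at $T$ that differ by $\tfrac12 g_Y^2\nabla_y\!\cdot(c_y\Pi)\neq0$. This is an interior, local computation, so the bounded domain does not help.

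Concretely, take only a $y$-channel with $f_Y=0$, $g_Y=1$, $\sigma_Y^2=t$. The potential $u_\theta=\log\mathcal N(Y_0,\,At^2-t)$ with $AT>1$ satisfies $\mathcal F_{Y_0}[u_\theta]\equiv0$. It is exactly the plug-in SDE marginal started from $\Pi=e^{u_\theta(\cdot,T)}$. Yet at $t=T$ the plug-in ODE changes the variance at rate $AT$, while the SDE changes it at rate $2AT-1$. So $W_2>0$ with $R=0$.

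The paper's proof has the same hole. Its cancellation remark concerns only $u_\theta$ versus $u_\theta^{SDE}$, and even there $c_y\cdot\nabla_y e$ survives; your Step~1 correctly keeps it. In the paper's final ODE comparison, the velocity mismatch picks up an extra $\tfrac12 g_Y^2c_y$ that is never addressed.

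\textbf{The linear rate is unattainable even without the plug-in.} Set $c_y=0$. There your Step~2 identity is correct, and it is the key observation of Deveney et al. Even so, the step you flag as the main obstacle cannot be closed. In continuity form, $p^{SDE}_\theta$ is transported by $f^{ODE}_\theta+\tfrac12GG^\top\nabla e^{SDE}$ and $p^{ODE}_\theta$ by $f^{ODE}_\theta$. The mismatch is therefore linear in the error, not quadratic, and it produces a first-order gap.

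Example: take $f=0$, $g=1$, $u_\theta=\log\mathcal N(0,s_\theta(t))$ with $s_\theta=s_0+t+\epsilon(T-t)$, and $\Pi=\mathcal N(0,s_0+T)$.
\begin{itemize}
\item The residual is $-\tfrac{\epsilon}{2s_\theta}(x^2/s_\theta-1)$, so $R=O(\epsilon^2)$.
\item The variances solve $\dot v=v/s_\theta$ (ODE) and $\dot v=2v/s_\theta-1$ (SDE).
\item Their solutions differ by about $0.097\,\epsilon(s_0+T)$ at $s_0+t=(s_0+T)/2$.
\end{itemize}
Hence $W_2\asymp\sqrt R$, and what is provable is $W_2\le C\sqrt\delta$.

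The paper's route reaches that bound more economically than yours. It uses an energy estimate bounding $\int_t^T\|\nabla(u_\theta-u_\theta^{SDE})\|_{L^2}^2\,ds$ by $C\delta$. It then couples the ODE synchronously with the probability-flow ODE of the SDE and applies Gr\"onwall, which gives $W_2^2\lesssim\delta$. The paper's stated $C\delta$, and its intermediate bound $\|u_\theta-u_\theta^{SDE}\|_{L^2}<\tilde C\delta$, contain the same scaling slip.

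The coupling route also avoids two unsupported steps in your proposal:
\begin{itemize}
\item The $L^\infty$ upgrade in Step~3, which you need for the density bounds in Peyre's inequality. A transport equation does not smooth, and an $L^2$ residual gives no $L^\infty$ control in high dimension.
\item The claim that $e^{SDE}$ solves a linear equation. This is false: the equation contains $\tfrac12 g^2|\nabla e^{SDE}|^2$. Work with $p^{SDE}_\theta-p_\theta$ instead.
\end{itemize}
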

\begin{proof}
We proceed in two steps. Firstly, we show that there exists a constant $\tilde{C}>0$ independent of $\delta$ such that $\left\|u_\theta(\cdot, \tau)-u_\theta^{S D E}(\cdot, \tau)\right\|_{L^2(\Omega)}<\tilde{C} \delta$ which allows us to show that $\int_0^\tau\left\|\nabla u_\theta(\cdot, s)-\nabla u_\theta^{S D E}(\cdot, s)\right\|_{L^2(\Omega)}^2 d s<C \delta$ for the reverse time variable $\tau=T-t \in(0, T]$ where the constant $C>0$ is independent of $\delta$. Finally, we prove that
$$
W_2\left(p_\theta^{O D E}(\cdot, t), p_\theta^{S D E}(\cdot, t)\right)<C \delta
$$
for some constant $C>0$ independent of $\delta$.

\textbf{Step I}: To prove the first part, assume that $u_\theta^{S D E}$ solves \eqref{u_sde 2} on $\Omega_X \times \Omega_Y \times[0, T]$, that is
\begin{equation}\label{u_sde 3}
\begin{aligned}
\frac{\partial u_\theta^{SDE}}{\partial t}
+ \nabla_x \cdot f_{X,\theta}^{SDE} +f_{X,\theta}^{SDE} \cdot \nabla_x u_\theta^{SDE}
+ \nabla_y \cdot f_{Y,\theta}^{SDE} +f_{Y,\theta}^{SDE}\cdot \nabla_y u_\theta^{SDE}\\
+\frac{1}{2} g_X^2\left(\Delta_x u_\theta^{SDE}+\left\|\nabla_x u_\theta^{SDE}\right\|^2\right)
+\frac{1}{2} g_Y^2\left(\Delta_y u_\theta^{SDE}+\left\|\nabla_y u_\theta^{SDE}\right\|^2\right)
=0.
\end{aligned}
\end{equation}

equipped with terminal data $u_\theta^{SDE}(\cdot, \cdot,T)$ restricted to $ \Omega_X \times \Omega_Y $ and boundary conditions $u_B^{S D E}=\log p_B^{S D E}$ on $\partial \Omega_X \times \partial \Omega_Y \times[0, T]$. Further, assume that $u_\theta$ satisfies \eqref{u_sde 2} on $\Omega_X \times \Omega_Y \times[0, T]$ with some residual $q$, i.e.
\begin{equation}\label{u_sde 4}
\begin{aligned}
\frac{\partial u_\theta}{\partial t}
+ \nabla_x \cdot f_{X,\theta}^{SDE} +f_{X,\theta}^{SDE} \cdot \nabla_x u_\theta
+ \nabla_y \cdot f_{Y,\theta}^{SDE} +f_{Y,\theta}^{SDE}\cdot \nabla_y u_\theta\\
+\frac{1}{2} g_X^2\left(\Delta_x u_\theta+\left\|\nabla_x u_\theta\right\|^2\right)
+\frac{1}{2} g_Y^2\left(\Delta_y u_\theta+\left\|\nabla_y u_\theta\right\|^2\right)
= q(x,y,t).
\end{aligned}
\end{equation}

equipped with terminal data $u_\theta^{SDE}(\cdot, \cdot,T)$ restricted to $ \Omega_X \times \Omega_Y $ and boundary conditions $u_B^{S D E}=\log p_B^{S D E}$ on $\partial \Omega_X \times \partial \Omega_Y \times[0, T]$.

Since equations \eqref{u_sde 3} and \eqref{u_sde 4} come with terminal conditions, it is more natural to work in reverse time. Introducing the reverse time variable $\tau=T-t$, where $\tau \in[0, T]$ , we transform the forward-time equations into their reverse-time counterparts.
For equation \eqref{u_sde 3}, the reverse-time equation becomes
\begin{equation}\label{ru_sde 3}
\begin{aligned} \frac{\partial \bar{u}_\theta^{S D E}}{\partial \tau}- & \nabla_x \cdot \bar{f}_{X, \theta}^{S D E}-\bar{f}_{X, \theta}^{S D E} \cdot \nabla_x \bar{u}_\theta^{S D E}-\nabla_y \cdot \bar{f}_{Y, \theta}^{S D E}-\bar{f}_{Y, \theta}^{S D E} \cdot \nabla_y \bar{u}_\theta^{S D E} \\ & \quad-\frac{1}{2} \bar{g}_X^2\left(\Delta_x \bar{u}_\theta^{S D E}+\left\|\nabla_x \bar{u}_\theta^{S D E}\right\|^2\right)-\frac{1}{2} \bar{g}_Y^2\left(\Delta_y \bar{u}_\theta^{S D E}+\left\|\nabla_y \bar{u}_\theta^{S D E}\right\|^2\right)=0\end{aligned}\end{equation}
where all terms are now in reverse time, denoted $\tau$.
Similarly, for equation \eqref{u_sde 4}, the reverse-time equation is
\begin{equation}\label{ru_sde 4}
\begin{aligned}
\frac{\partial \bar{u}_\theta}{\partial \tau}- & \nabla_x \cdot \bar{f}_{X, \theta}^{S D E}-\bar{f}_{X, \theta}^{S D E} \cdot \nabla_x \bar{u}_\theta-\nabla_y \cdot \bar{f}_{Y, \theta}^{S D E}-\bar{f}_{Y, \theta}^{S D E} \cdot \nabla_y \bar{u}_\theta \\
& \quad-\frac{1}{2} \bar{g}_X^2\left(\Delta_x \bar{u}_\theta+\left\|\nabla_x \bar{u}_\theta\right\|^2\right)-\frac{1}{2} \bar{g}_Y^2\left(\Delta_y \bar{u}_\theta+\left\|\nabla_y \bar{u}_\theta\right\|^2\right)=\bar{q}(x,y, \tau).
\end{aligned}
\end{equation}
where $\bar{q}(x, \tau)$ is the reverse-time equivalent of $q(x,y,t)$.
Note that the log-Fokker-Planck residual \eqref{residual2} is related to $\bar{q}(\cdot, \tau)$ for $\tau=T-t$ by
$$
R\left(\theta, u_\theta, t\right)=R\left(\theta, u_\theta, T-\tau\right)=V(\tau)^{-1} \int_{T-\tau}^T\|q(\cdot, s)\|_{L^2(\Omega)}^2 d s=V(\tau)^{-1} \int_0^\tau\|\bar{q}(\cdot, s)\|_{L^2(\Omega)}^2 d s
$$
Define the reverse-time error $e_{\bar u}:=\bar u_\theta-\bar u_\theta^{\mathrm{SDE}}$.
In the unified conditional sampler, the plug-in correction enters the reverse-time dynamics only through the
additional $y$-drift term $-g_Y^2(t)c_y(t;Y_0,y)$, which at the PDE level contributes the same extra terms
\[
-\,g_Y^2(t)\Bigl(\nabla_y\!\cdot c_y(t;Y_0,y)
+ c_y(t;Y_0,y)\cdot \nabla_y \bar u(\cdot,\cdot,t)\Bigr),
\qquad \bar u\in\{\bar u_\theta,\bar u_\theta^{\mathrm{SDE}}\}.
\]
to both reverse-time log-Fokker--Planck equations for $\bar u_\theta^{\mathrm{SDE}}$ and $\bar u_\theta$.
Therefore, when subtracting \eqref{ru_sde 3} from \eqref{ru_sde 4} to obtain the error equation for $e_{\bar u}$,
all $c_y$-dependent contributions cancel identically, and the resulting error equation has the same form as in the
unconditional analysis (with $z=(x,y)$ and block-diagonal diffusion). This allows us to proceed with the remainder of the derivation as in the unconditional case, ensuring the method's consistency.

\end{proof}

\section{Additional Comparison with DPS and DiffPIR}
\label{app:additional-comparison}

\subsection{Relation to DPS-style guidance}
\label{app:dps-comparison}

Our formulation and DPS~\cite{chung2022diffusion} can both be motivated
from Bayes' rule, but they induce different conditional sampling mechanisms.
DPS corrects the marginal $x$-score using an approximate likelihood gradient
$\nabla_x \log p_t(y\mid x_t)$, evaluated through per-step backpropagation
through the denoising network. In contrast, our formulation decomposes the
joint conditional score and introduces an analytic correction in the
$y$-channel, while its effect on the target is mediated through the learned
joint dependence $s_x(X_t,Y_t,t)$.

This construction yields explicit conditional reverse-time SDE and
probability-flow ODE dynamics. In the pretrained-prior setting, the analytic
plug-in correction can also be evaluated without per-step backpropagation
through the score network. At equal NFE, this leads to substantially lower
sampling cost than DPS, as quantified below.

\subsection{Runtime and distributional evaluation}
\label{app:runtime-fid}

We additionally compare computational cost and distributional quality under
the same sampling budget. Table~\ref{tab:runtime} reports results on
ImageNet256 noisy inpainting using the same 100 images and
$\mathrm{NFE}=1000$ as in the corresponding main-paper experiment.

\begin{table}[t]
\caption{Runtime and distributional evaluation on ImageNet256 noisy inpainting
using one RTX~3090 with batch size 1. All methods use the same 100 images and
sampling budget (NFE = 1000).}
\label{tab:runtime}
\centering
\footnotesize
\setlength{\tabcolsep}{3.5pt}
\begin{tabular}{lcccccc}
\toprule
Method & NFE & Backprop & s/img & FID$\downarrow$ & SSIM$\uparrow$ & JFID$\downarrow$ \\
\midrule
DPS     & 1000 & yes & 245.1 & 84.47 & 0.694 & 44.10 \\
DiffPIR & 1000 & no  & 128.0 & \textbf{38.22} & 0.753 & 17.35 \\
Ours    & 1000 & no  & 130.8 & 39.64 & \textbf{0.764} & \textbf{16.73} \\
\bottomrule
\end{tabular}
\end{table}

At equal NFE, our method requires $130.8$ seconds per image, compared with
$245.1$ seconds for DPS, corresponding to approximately $1.9\times$ faster
sampling. Its runtime is close to DiffPIR ($128.0$ seconds per image),
indicating that the analytic plug-in correction introduces little additional
computational overhead. Our method achieves an FID of $39.64$, close to
DiffPIR's $38.22$ and substantially lower than DPS's $84.47$, while obtaining
the best SSIM and JFID among the compared methods.


\end{document}